\documentclass[a4paper,11pt]{article}
\pdfoutput=1
\usepackage{a4wide}
\usepackage{amsmath,amsfonts,amssymb,amsthm}
\usepackage[colorlinks,citecolor=blue]{hyperref}
\usepackage{enumitem}
\usepackage{bbm}
\usepackage{natbib}
\usepackage{microtype}
\usepackage{graphicx}
\usepackage{extarrows}
\usepackage{pifont}
\usepackage{booktabs}

\setcitestyle{square}

\newtheorem{theorem}{Theorem}[section]
\newtheorem{lemma}[theorem]{Lemma}

\theoremstyle{definition}
\newtheorem{definition}[theorem]{Definition}

\newtheorem{remark}[theorem]{Remark}

\DeclareMathOperator*{\argmin}{argmin}

\numberwithin{equation}{section}
\numberwithin{table}{section}
\numberwithin{figure}{section}

\def \bD {\mathbb{D}}
\def \bE {\mathbb{E}}

\def \bN {\mathbb{N}}

\def \bP {\mathbb{P}}
\def \bQ {\mathbb{Q}}
\def \bR {\mathbb{R}}

\def \bZ {\mathbb{Z}}

\def \cA {\mathcal{A}}
\def \cB {\mathcal{B}}

\def \cD {\mathcal{D}}
\def \cE {\mathcal{E}}
\def \cF {\mathcal{F}}
\def \cG {\mathcal{G}}
\def \cH {\mathcal{H}}

\def \cM {\mathcal{M}}
\def \cN {\mathcal{N}}
\def \cO {\mathcal{O}}
\def \cP {\mathcal{P}}

\def \cR {\mathcal{R}}
\def \cS {\mathcal{S}}
\def \cT {\mathcal{T}}

\def \cX {\mathcal{X}}

\def \NN {\mathcal{NN}}

\def \Pdim {\,{\rm Pdim}\,}
\def \sgn {\,{\rm sgn}\,}
\def \star {\,{\rm star}\,}

\begin{document}
\title{Minimax rates for learning spectral Barron functions by deep ReLU neural networks}
\author{
Songqiu Ma \thanks{School of Mathematics (Zhuhai), Sun Yat-sen University, Zhuhai, China. E-mail: \href{mailto:masq3@mail2.sysu.edu.cn}{masq3@mail2.sysu.edu.cn}.}
\and
Yunfei Yang \thanks{School of Mathematics (Zhuhai) and Guangdong Province Key Laboratory of Computational Science, Sun Yat-sen University, Zhuhai, China. Corresponding author, E-mail: \href{mailto:yangyunfei@mail.sysu.edu.cn}{yangyunfei@mail.sysu.edu.cn}.}
}
\date{}
\maketitle

\begin{abstract}
We study how well deep neural networks approximate and learn spectral Barron functions. Recent studies have shown that these function classes can be efficiently approximated by shallow neural networks without suffering from the curse of dimensionality. We complement these results by providing new approximation bounds for deep networks with ReLU activation and establishing the minimax rates for learning these function classes. Specifically, we show that $d$-dimensional spectral Barron functions with smoothness index $s>0$ can be approximated by deep ReLU neural networks with approximation rate $\widetilde{\mathcal{O}} (S^{-\frac{1}{2}-\frac{s}{d}})$, where $S$ denotes the number of nonzero parameters in the network. Using this approximation result, we further show that deep ReLU neural networks can learn spectral Barron functions in a fast rate $n^{-\frac{d+2s}{2d+2s}}$ with $n$ training samples. Finally, we prove that this convergence rate is minimax optimal up to logarithmic factors.

\textbf{Keywords:} deep neural networks, spectral Barron space, approximation bound, minimax rate, curse of dimensionality
\end{abstract}

\section{Introduction}

In the past decade, deep learning has achieved unprecedented empirical success in processing high-dimensional data across various fields \citep{lecun2015deep,goodfellow2016deep}. While these practical triumphs highlight the remarkable capacity of deep neural networks to extract intricate patterns from complex data, a rigorous theoretical foundation explaining why they work so well is still in its infancy. Bridging this gap between practice and theory is of paramount importance: it not only provides provable performance guarantees but also guides the principled design of network architectures. 

One of the key reasons for the success of deep neural networks is their ability to effectively approximate and learn many complex nonlinear functions. It is well-known that neural networks with one hidden layer are universal in the sense that they can approximate any continuous functions on compact sets \citep{cybenko1989approximation,hornik1991approximation,pinkus1999approximation}. Recent studies also characterize how well neural networks with ReLU activation approximate classical smooth function classes, such as H\"older, Sobolev and Besov spaces, by the number of parameters in the network \citep{yarotsky2017error,yarotsky2018optimal,yarotsky2020phase,suzuki2019adaptivity} and the width and depth \citep{shen2020deep,shen2022optimal,lu2021deep,siegel2023optimal,yang2025optimalshallow,yang2025optimal,li2026shallow}. These approximation results can be used to show that deep neural networks can achieve minimax optimal rates (up to logarithmic factors) for learning smooth function classes in nonparametric regression \citep{schmidthieber2020nonparametric,suzuki2019adaptivity,kohler2021rate,fan2024how,yang2024nonparametric,yang2025optimal}. However, the approximation and learning rates of these function classes unavoidably suffer from the curse of dimensionality due to the high complexities of these function spaces. Hence, these results cannot explain the practical success of deep learning on high-dimensional data. In order to overcome or weaken the curse of dimensionality, several structural assumptions on the data distribution and the target function have been considered in the literature. For instance, one may assume that the data distribution is low-dimensional \citep{nakada2020adaptive,chen2022nonparametric,jiao2023deep}, the target function has compositional structure \citep{schmidthieber2020nonparametric,kohler2021rate,fan2024how,huang2026learning} or is anisotropic and mixed smooth \citep{suzuki2019adaptivity,montanelli2019new,suzuki2021deep,yang2026approximation}.

Another typical example that we can avoid the curse of dimensionality is when the target function is in the closed convex hull of some bounded dictionary $\bD$, i.e.,
\[
f\in B_1(\bD) := \overline{\left\{ \sum_{i=1}^N a_i d_i,\ d_i\in \bD,\ N\in\bN,\ \sum_{i=1}^N |a_i|\le 1 \right\}}.
\]
One can characterize these functions by using the gauge norm $\|f\|_\bD := \inf\{ c>0, f\in c B_1(\bD)\}$. This norm naturally defines a Banach space, which is called the variation space of the dictionary \citep{kurkova2001bounds,kurkova2002comparison,siegel2022sharp,siegel2023characterization} and $B_1(\bD)$ is the unit ball of this space. When the dictionary $\bD$ is parameterized by the activation function in neural network, these spaces are also known as Barron spaces \citep{barron1993universal,klusowski2018approximation,weinan2019priori,weinan2022barron}. For the ReLU$^k$ activation, the corresponding Barron space can be characterized by using the Radon transform \citep{ongie2020function,parhi2021banach,parhi2022what,mao2026approximation}, and the optimal approximation and learning rates of these functions by shallow networks have been established \citep{parhi2023minimax,siegel2025optimal,yang2024nonparametric}.

In this paper, we are mainly interested in the approximation and learning theories of spectral Barron functions. Specifically, following \citet{liao2025spectral}, we define for $s>0$,
\[
\cB^s(\bR^d) := \left\{ f\in \cS'(\bR^d) : \|f\|_{\cB^s(\bR^d)} := \int_{\bR^d} (1 + |\xi|)^s |\cF f(\xi)| d\xi<\infty \right\},
\]
where $|\xi|$ denote the Euclid norm of $\xi$ and $\cS'(\bR^d)$ is the space of tempered distributions and we use the following conversion for the Fourier transform $\cF f$ of $f\in L^1(\bR^d)$,
\[
\cF f (\xi) = \int_{\bR^d} f(x) e^{-2\pi i x \cdot \xi} dx.
\]
\citet{barron1993universal} first introduced the class with $s=1$ and showed that these functions can be approximated by shallow neural networks with sigmoidal activation without curse of dimensionality. The functional properties of $\cB^s(\bR^d)$ have been studied in the recent works \citep{meng2022new,liao2025spectral}. In particular, it was shown by \citet{liao2025spectral} that these function classes are Banach spaces and we have the embedding $B^{s+d/2}_{2,1}(\bR^d) \hookrightarrow \cB^s (\bR^d) \hookrightarrow B^{s}_{\infty,1}(\bR^d)$ between spectral Barron spaces and Besov spaces. For $\Omega=[0,1]^d$, we denote the restriction on $\Omega$ by
\[
\cB^s(\Omega) := \left\{f=g|_\Omega: g\in \cB^s(\bR^d) \right\},\quad \|f\|_{\cB^s(\Omega)} := \inf_{f=g|_\Omega} \|g\|_{\cB^s(\bR^d)},
\]
and the unit ball of this space by $\cB^s_1(\Omega) = \{f:\|f\|_{\cB^s(\Omega)}\le 1\}$. It has been shown by \citet{siegel2023characterization} that $\cB^s(\Omega)$ is equivalent to the variation space of the dictionary
\begin{equation}\label{dictionary}
\bD = \left\{ (1+|\omega|)^{-s} e^{2\pi i \omega \cdot x} : \omega\in \bR^d \right\}.
\end{equation}
Using Poisson summation formula, \citet{siegel2022high} showed that functions in $\cB^s_1(\Omega)$ can be approximated by shallow neural networks with cosine activation function and established the optimal $L^2(\Omega)$ approximation rate $\cO(W^{-1/2-s/d})$, where $W$ is the width of the network. For ReLU$^k$ shallow networks, they also gave approximation upper bound $\widetilde{\cO}(W^{-t})$ for $s\ge 1/2$, where $t=1/2 + (2s-1)/(2d+2)$ when $s<(k+1/2)(d+1)+1/2$ and $t=k+1$ otherwise. These results were extended and improved by \citet{ma2022uniform} to uniform approximation. In particular, for ReLU$^k$ activation, they derived the uniform error bound $\widetilde{\cO}(W^{-r})$, where $r=1/2 + (s-1)/(d+1)$ when $1<s\le k(d+1)+1$ and $r=k+1$ when $s\ge (k+1)(d+1)$. For deep neural networks with ReLU activation, \citet{liao2025spectral} proved $L^2(\Omega)$ error bound $\cO(W^{-sL})$ when the depth is $L$ and $0<sL\le 1/2$. Hence, their approximation bound complements previous results only when $s<1/2$. It is still unclear whether we can improve the approximation rates in \citet{siegel2022high} by using deep neural networks. For machine learning, it is also natural to ask what are the minimax rates for learning spectral Barron spaces and whether deep neural networks can achieve these minimax rates?

In this paper, we answer the above questions by providing optimal approximation bounds for deep ReLU neural networks and show that these networks can achieve the minimax rates for learning spectral Barron functions. Our main contributions are summarized as follows:

\begin{enumerate}[label=\textnormal{(\arabic*)}]
\item We establish an upper bound for the approximation of functions in the spectral Barron space $\cB^s(\Omega)$ by deep ReLU neural networks with width $W$, depth $L$ and $S$ nonzero parameters (Theorem \ref{thm:app upper bound}). Our result implies the approximation rate $\cO(S^{-\frac{1}{2}-\frac{s}{d}} (\log S)^{\frac{3}{2} + \frac{s}{d}})$ for such networks. We then derive two lower bounds $(SL)^{-\frac{1}{2}-\frac{s}{d}}$ and $(W^2 L^2)^{-\frac{1}{2}-\frac{s}{d}}$ up to logarithmic factors via metric entropy arguments in Theorems \ref{thm:app lower bound 1} and \ref{thm:app lower bound 2}. These lower bounds imply that our upper bound is optimal up to logarithmic factors in certain cases.

\item We apply our approximation result to study how well deep neural networks learn spectral Barron functions in nonparametric regression setting. By integrating our approximation bound with statistical learning theory, we show that deep ReLU networks can achieve a fast convergence rate $n^{-\frac{d+2s}{2d+2s}}$ with $n$ training samples (Theorem \ref{thm:rate}). Furthermore, we also prove that the obtained rate is minimax optimal up to logarithmic factors (Theorem \ref{thm:minimax lower bound}). To the best of our knowledge, this is the first result that characterizes the minimax rates for learning spectral Barron classes.
\end{enumerate}

The remainder of this paper is organized as follows. Section \ref{main} gives our main results on the approximation and learning theories of spectral Barron functions. Section \ref{app proof} provides the detailed proofs for the approximation upper and lower bounds. Section \ref{gen proof} presents the proofs for the generalization error bounds and the minimax lower bound. We use the following notations for asymptotic comparisons of two quantities $A$ and $B$. We denote $A \lesssim B$ (or $B \gtrsim A$ or $A=\cO(B)$) denotes the statement that $A\le CB$ for some constant $C>0$. We will also denote $A \asymp B$ when $A \lesssim B \lesssim A$ and use the notation $\widetilde{\cO}(B)$ to omit logarithmic factors.

\section{Main results} \label{main}

We consider the nonparametric regression problem, which is the one of the standard frameworks for analyzing the performances of neural networks \citep{schmidthieber2020nonparametric,nakada2020adaptive,kohler2021rate,jiao2023deep}. Suppose we have a dataset of $n\ge 2$ samples $\cD_n =\{(X_i, Y_i)\}_{i=1}^n$, which are independent and identically distributed (i.i.d.) as
\[
Y_i = f_0(X_i) + \eta_i,\quad X_i \sim \mu,\quad i=1,\dots,n,
\]
where $\mu$ is the marginal distribution of the covariate $X_i$, and the noise $\eta_i$ is independent of $X_i$ with zero mean $\bE[\eta_i]=0$. Throughout this paper, we assume that $\mu$ is supported on $\Omega =[0,1]^d$ for convenience and note that one can generalize the results to any compact sets using affine transforms. The goal of nonparametric regression is to estimate the regression function $f_0(x) = \bE[Y|X=x]$ from the observed data $\cD_n$. One of the most popular estimators is the least squares
\begin{equation}\label{least squares}
\widehat{f}_n \in \argmin_{f\in \cH_n} \frac{1}{n} \sum_{i=1}^n (f(X_i)- Y_i)^2,
\end{equation}
where $\cH_n$ is a suitably chosen hypothesis class, which may depend on the sample size $n$. For simplicity, we assume here and in the sequel that the minimum above indeed exists. We consider the case that $\cH_n$ is parameterized by a neural network defined as follows. 

Given $L,N_1,\dots, N_L \in \bN$, we consider the mapping $\phi:\bR^{d} \to \bR^{k}$ that can be parameterized by a ReLU neural network of the following form
\begin{equation}\label{NN}
\begin{aligned}
\phi_0(x) &= x, \\
\phi_{\ell+1}(x) &= \sigma(A_\ell \phi_\ell (x) + b_\ell), \quad \ell = 0,1,\dots,L-1, \\
\phi(x) &= A_L \phi_L(x) +b_L,
\end{aligned}
\end{equation}
where $A_\ell \in \bR^{N_{\ell+1}\times N_{\ell}}$, $b_\ell\in \bR^{N_{\ell+1}}$ with $N_0 =d$ and $N_{L+1} =k$. The activation function $\sigma(t) := \max\{t,0\}$ is the Rectified Linear Unit function (ReLU) and it is applied component-wisely. We remark that there is no activation function in the output layer, which is the usual convention in applications. The numbers $W:=\max\{N_1,\dots,N_L\}$ and $L$ are called the width and depth (the number of hidden layers) of the neural network, respectively. We denote by $\cN\cN_{d,k}(W,L)$ the set of mappings that can be parameterized by ReLU neural networks of the form (\ref{NN}) with width $W$ and depth $L$. In order to reduce the complexity of the function class, we will also make some restriction on the parameters in the network. Following \citet{schmidthieber2020nonparametric,nakada2020adaptive}, we denote $\NN_{d,k}(W,L,S,B)$ as the set of all $\phi \in \NN_{d,k}(W,L)$ in the form (\ref{NN}) such that
\begin{align*}
\sum_{\ell=0}^{L} (\|A_\ell\|_0 + \|b_\ell\|_0) &\le S, \\
\max_{0\le \ell\le L} \max \{\|A_\ell\|_\infty,\|b_\ell\|_\infty\} &\le B.
\end{align*}
Hence, we make sparsity constraint on the network and restrict the magnitude of the parameters. Note that we always have $S\le (d+1)W+W(W+1)(L-1)+kW+k$ and hence $S\lesssim W^2L$. When the input dimension $d$ and the output dimension $k$ are clear from contexts, we simplify the notation to $\NN(W,L,S,B)$ for convenience. 

We are going to analyze the performance of the least squares (\ref{least squares}), where the hypothesis class $\cH_n= \NN(W_n,L_n,S_n,B_n)$ is parameterized by the neural network described as above. The performance of the estimator is measured by the expected risk
\[
\cR(\widehat{f}_n) := \bE_{(X,Y)} [(\widehat{f}_n(X)-Y)^2],
\]
which is equivalent to evaluating the estimator by the excess risk
\[
\|\widehat{f}_n - f_0\|_{L^2(\mu)}^2 = \cR(\widehat{f}_n) - \cR(f_0).
\]
In order to derive convergence rate, it is necessary to make some assumption on the regression function $f_0$ that represents prior knowledge on the problem. As mentioned in the introduction, it is well-known that, for classical smoothness assumption, the minimax optimal rate suffers from the curse of dimensionality. To avoid this, we will assume that $f_0$ is in the unit ball of the spectral Barron space, i.e., $f_0\in \cB^s_1(\Omega)$. And we are going to show that neural networks can achieve the minimax rate for learning this function class. Note that we have implicitly assumed that the dataset is real-valued, so we will only consider real-valued function $f_0$ in this paper. It is easy to extend our results to the complex case if one is interested in complex-valued data.

Let $f^*\in \cH_n$ be any good approximation of $f_0$, we can decompose the error as
\[
\|\widehat{f}_n - f_0\|_{L^2(\mu)}^2 \le 2 \|f^* - f_0\|_{L^2(\mu)}^2 + 2 \|\widehat{f}_n - f^*\|_{L^2(\mu)}^2,
\]
where the first term is the approximation error of the model and the second term is estimation error due to the fact that we have only finite samples. While the estimation error can be analyzed by using statistical learning theory \citep{shalevshwartz2014understanding,mohri2018foundations}, we first bound the approximation error by explicitly constructing a good approximator $f^*$. Our approximation result is summarized in the following theorem. The proof is given in Section \ref{app proof}.

\begin{theorem}\label{thm:app upper bound}
For any $f_0\in\cB_1^s(\Omega)$ with $s>0$ and any $\varepsilon\in (0,1)$, there exists a ReLU network $f^*\in\NN(W,L,S,B)$ with
\[
W \lesssim \varepsilon^{-\frac{2d}{d+2s}}\left(\log\frac{1}{\varepsilon}\right)^{\frac{d}{d+2s}},\quad L \lesssim \left(\log \frac{1}{\varepsilon} \right)^2,\quad S \lesssim \varepsilon^{-\frac{2d}{d+2s}}\left(\log\frac{1}{\varepsilon}\right)^{\frac{3d+4s}{d+2s}},\quad B \lesssim 1,
\]
such that
\[
\|f^*-f_0\|_{L^\infty(\Omega)} \le \varepsilon.
\]
\end{theorem}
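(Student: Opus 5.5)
We first approximate $f_0$ uniformly by a sparse trigonometric sum, and then realize each cosine term with a small ReLU subnetwork that shares a common periodic component.

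\textbf{Step 1: sparse trigonometric approximation.} Take an extension $g\in\cB^s(\bR^d)$ of $f_0$ with $\|g\|_{\cB^s(\bR^d)}\le 2$. Since $f_0$ is real, we may write
\[
f_0(x)=\int_{\bR^d}\cos\bigl(2\pi\omega\cdot x+\theta(\omega)\bigr)\,|\cF g(\omega)|\,d\omega .
\]
Fix a frequency cutoff $R$. Low frequencies $|\omega|\le R$ are handled by a Poisson-summation / periodization argument in the style of \citet{siegel2022high}, combined with the uniform (chaining or discrepancy) refinement of \citet{ma2022uniform}. The goal is a sum of $N$ cosines
\[
\sum_{j=1}^N a_j\cos(2\pi\omega_j\cdot x+\theta_j),\qquad \sum_j |a_j|(1+|\omega_j|)^s\lesssim 1 .
\]
Its $L^\infty(\Omega)$ error should be $\lesssim N^{-1/2-s/d}\sqrt{\log N}$.

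The mechanism is the following. Frequencies in the ball of radius $R$ can be rounded to a lattice of spacing $\asymp 1$, which gives roughly $R^d$ atoms. Frequencies with $|\omega|>R$ carry weight $\lesssim R^{-s}$. On the lattice, a stratified (or importance) sampling with $N$ terms gives variance $\lesssim R^{-2s}/N$ for the shells. Balancing with $R^d\asymp N$ yields $N^{-1/2-s/d}$. A union bound over an $\varepsilon$-net of $\Omega$ then upgrades this to sup-norm at the cost of a $\sqrt{\log}$ factor. I expect to follow that argument and, if necessary, cite it.

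Setting $N\asymp \varepsilon^{-2d/(d+2s)}(\log\frac1\varepsilon)^{d/(d+2s)}$ makes this error $\le\varepsilon/2$, and this $N$ matches the stated width.

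\textbf{Step 2: ReLU realization of each cosine.} Each term has the form $a_j h(\omega_j\cdot x+\theta_j/2\pi)$ with $h(t)=\cos(2\pi t)$. The inner argument $t_j=\omega_j\cdot x+\theta_j/2\pi$ ranges over an interval of length $\lesssim |\omega_j|$. We implement it in two parts.

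1. Fractional part. Reduce $t_j$ modulo $1$ using a sawtooth built by composing hat functions. This needs depth $\lesssim\log(|\omega_j|+1)\lesssim\log\frac1\varepsilon$ and constant width, with all weights bounded by $1$ after splitting large multiplications across layers, which keeps $B\lesssim1$.
2. Cosine on $[0,1]$. Approximate $\cos(2\pi\cdot)$ on $[0,1]$ to accuracy $\delta$ with a Yarotsky-type network, using $\cO(\log\frac1\delta)$ width and $\cO(\log\frac1\delta)$ depth with bounded weights.

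Taking $\delta=\varepsilon/(2\sum_j|a_j|)$ costs only $\cO(\log\frac1\varepsilon)$ per factor. Each block therefore has depth $\lesssim(\log\frac1\varepsilon)^2$ once bounded-weight multiplications are accounted for. Each block has $\lesssim(\log\frac1\varepsilon)^2$ nonzero parameters; the stated $S$ allows roughly $N(\log\frac1\varepsilon)^{2}$, which matches.

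We then place the $N$ blocks in parallel, padding with identity ReLU channels to equalize depth. This gives width $\lesssim N$, up to a constant if the cosine subnetwork is made of constant width with extra depth, and one final linear output layer.

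\textbf{Main obstacle.} The critical point is achieving width $\asymp N$ without an extra logarithmic factor while keeping $B\lesssim1$. Large frequencies $|\omega_j|\lesssim R\sim\varepsilon^{-\cdot}$ and large coefficients require multiplying by big constants. I will instead compose $\lceil\log_2|\omega_j|\rceil$ doubling layers using weights of size $2$, interleaved with the sawtooth; this is exactly what makes the depth grow as $(\log\frac1\varepsilon)^2$. The outer coefficients $a_j$ are bounded because $\sum_j|a_j|\lesssim1$.

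The secondary difficulty is the uniform-in-$x$ sampling bound in Step 1. If the cited $L^2$ result of \citet{siegel2022high} does not directly give sup-norm, I will redo the stratified Monte Carlo with a Bernstein inequality plus a union bound over a grid of mesh $\varepsilon/R$. The Lipschitz constants involved are polynomial in $1/\varepsilon$, so this only costs $\log\frac1\varepsilon$.
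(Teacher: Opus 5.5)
Your plan follows the paper: cite the $N$-term uniform approximation of \citet{siegel2022high,ma2022uniform} (error $N^{-1/2-s/d}\sqrt{\log N}$, with $\sum_j|a_j|(1+|\omega_j|)^s\lesssim 1$), realize each cosine by a constant-width, bounded-weight subnetwork of depth $(\log\frac1\varepsilon)^2$, and run the $N$ blocks in parallel.

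\textbf{Missing step: bounding the frequencies.} You assert $|\omega_j|\lesssim R\sim\mathrm{poly}(1/\varepsilon)$, but the sampling argument does not give this. The atoms drawn from the tail $|\omega|>R$ can have arbitrarily large frequency. Without a bound, the depth $\log(|\omega_j|+1)$ of your reduction block, and hence $L$, is unbounded. The fix uses your own coefficient bound. Discard every term with $|\omega_j|>T$; this costs at most $(1+T)^{-s}\sum_j|a_j|(1+|\omega_j|)^s\lesssim T^{-s}$ in $L^\infty(\Omega)$. Then take $T\asymp\varepsilon^{-1/s}$, so that $\log T\lesssim\log\frac1\varepsilon$. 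This is exactly the paper's passage from $\operatorname{Re} f_N$ to $f_{N,T}$ in \eqref{f_NT}--\eqref{temp2}. Your fallback grid/union-bound argument needs the same truncation done first. For $s<1$, neither $f_0$ nor the sampled atoms have Lipschitz constants controlled by a power of $1/\varepsilon$. Also, the lattice in Step~1 comes from an exact Poisson-summation representation on $\alpha+L_0^{-1}\bZ^d$, not from rounding frequencies, which would cost $\cO(1)$ on $\Omega$.

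\textbf{Where you differ: the large argument.} The paper computes $2\pi\omega_q\cdot x$ exactly with bounded weights via the doubling maps $\sigma(2\sigma(2\cdots\sigma(2t)))$. It then applies Lemma~\ref{lem:cos} on $[-D,D]$, whose $\log D$ term hides the periodic reduction. You instead do the reduction explicitly and approximate cosine only on a bounded interval. That works, with three corrections.
\begin{enumerate}[label=\textnormal{(\roman*)}]
\item The fractional part is discontinuous, so no ReLU network realizes it. Composed hat functions give the triangle wave: $g^{\circ(m+1)}(t/2^m)=2\operatorname{dist}(t,\bZ)$ for $t\in[0,2^m]$, with $g(u)=2\min\{u,1-u\}$. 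You then need $\cos(2\pi t)=\cos(2\pi\operatorname{dist}(t,\bZ))$, which uses evenness as well as periodicity.
\item These compositions already do the doubling, with weights at most $4$. They contribute only depth $\cO(\log\frac1\varepsilon)$, so no separate doubling layers are needed. The $(\log\frac1\varepsilon)^2$ in the depth and in the per-block parameter count comes from the cosine approximation itself.
\item Use the constant-width form of that cosine approximation, i.e.\ Lemma~\ref{lem:cos} with $D=\pi$, applied to $2\pi\operatorname{dist}(t,\bZ)$. An $\cO(\log\frac1\varepsilon)$-width block would push the total width above the stated $W\asymp N$.
\end{enumerate}
With these points settled, your accounting $W\lesssim N$, $L\lesssim(\log\frac1\varepsilon)^2$, $S\lesssim N(\log\frac1\varepsilon)^2$, $B\lesssim 1$ matches the statement.
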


Representing $\varepsilon$ by $S$ in Theorem \ref{thm:app upper bound}, we can characterize the approximation error by the number of parameters in the network as
\begin{equation}\label{app rate}
\|f^*-f_0 \|_{L^{\infty}(\Omega)} \lesssim S^{-\frac{1}{2}-\frac{s}{d}}
(\log S)^{\frac{3}{2}+\frac{2s}{d}},
\end{equation}
for some network with width depth $L\lesssim (\log S)^2$, network size $WL \asymp S$ and bounded weights. This bound improves the recent results of \citet{liao2025spectral}, who established the $L^2$ approximation bound $\cO(W^{-sL})$ for $0<sL<1/2$, since their approximation rate is slower than $\cO(S^{-1/2})$. Note that they also proved their approximation upper bound is not improvable when $0<sL<1/2$. Thus, Theorem \ref{thm:app upper bound} implies the advantage of very deep neural networks for approximating spectral Barron functions. On the other hand, \citet{siegel2022high,ma2022uniform} analyzed the approximation capacities of shallow networks. In particular, \citet{ma2022uniform} gave the uniform upper bound $\cO(W^{-\frac12-\frac{s}{d}} \sqrt{\log W})$ for cosine activation and the bound $\widetilde{\cO}(W^{-r})$ for ReLU activation, where $r=1/2 + (s-1)/(d+1)$ when $1<s\le d+2$ and $r=2$ when $s\ge 2d+2$. Since $S\asymp W$ for shallow networks, our result show that deep ReLU networks can achieve the same approximation rate as shallow cosine networks up to logarithmic factors and improve their approximation rate for ReLU networks in terms of the number of parameters.

In the following theorem, we give a lower bound for the $L^2$ approximation error, which indicates that the rate in (\ref{app rate}) is optimal in certain situations.

\begin{theorem}\label{thm:app lower bound 1}
For any $s>0$, there exists $f_0 \in \cB_1^s(\Omega)$ such that, for any $B\ge 1$,
$$
\inf_{f \in \NN(W,L,S,B)} \|f - f_0\|_{L^2(\Omega)} \gtrsim (SL \log (SB))^{-\frac{1}{2}-\frac{s}{d}}.
$$ 
\end{theorem}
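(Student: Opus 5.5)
We prove the lower bound by a metric entropy (counting) argument. We compare a large, well-separated packing of $\cB^s_1(\Omega)$ in $L^2(\Omega)$ against the limited covering capacity of $\NN(W,L,S,B)$. Two steps come first.

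\textbf{(i) A packing of $\cB^s_1(\Omega)$.} Fix $N=m^d$ and frequencies $\omega_k = k$ for $k\in\{0,\dots,m-1\}^d$, so $|\omega_k|\lesssim m$. For sign vectors $\theta\in\{\pm1\}^{N}$ set
\[
f_\theta(x) = c\, N^{-1}m^{-s}\sum_k \theta_k \cos(2\pi \omega_k\cdot x).
\]
After multiplying by a smooth compactly supported cutoff equal to $1$ on $\Omega$, with Fourier transform decaying fast, this extension has spectral Barron norm at most $\sum_k N^{-1}m^{-s}(1+|\omega_k|)^s\lesssim 1$. A suitable $c$ therefore puts $f_\theta\in\cB^s_1(\Omega)$. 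By orthogonality of the cosines on $[0,1]^d$,
\[
\|f_\theta-f_{\theta'}\|_{L^2(\Omega)}^2 \asymp N^{-2}m^{-2s}\, d_H(\theta,\theta').
\]
The Varshamov--Gilbert lemma gives $2^{N/8}$ sign vectors with pairwise Hamming distance at least $N/8$. This yields a packing of $\cB^s_1(\Omega)$ with $e^{cN}$ elements and separation
\[
\delta_N \asymp N^{-1/2}m^{-s} = N^{-\frac12-\frac sd}.
\]

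\textbf{(ii) Covering numbers of networks.} We use the standard bound (as in Schmidt-Hieber) for networks with $S$ nonzero weights bounded by $B$, depth $L$, on $[0,1]^d$:
\[
\log \cN(\epsilon,\NN(W,L,S,B),\|\cdot\|_{L^\infty})\lesssim S L\log(SB)+S\log(1/\epsilon).
\]
Since outputs are unbounded, a clipping step is not needed; we only use $L^\infty$ perturbation of parameters on a bounded domain.

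\textbf{Choosing a single $f_0$.} The statement asks for one fixed $f_0$ that works for every $(W,L,S,B)$, not a worst case per architecture. A packing argument only shows that for each architecture some element of the packing is badly approximated. To get a single function, we use the standard "gluing" trick. Take dyadic scales $N_j=2^{dj}$ and the corresponding packings $\{f^{(j)}_\theta\}$. Set
\[
f_0=\sum_j j^{-2} f^{(j)}_{\theta_j},
\]
with the signs $\theta_j$ chosen suitably; the normalization keeps the $\cB^s$ norm at most $1$ after constants. For the cosine frequency blocks at different scales to be orthogonal, we take the frequencies at scale $j$ in an annulus $|\omega|\asymp 2^j$, for instance $k\in\{2^{j-1},\dots,2^j-1\}^d$. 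The existence of good signs $\theta_j$ comes from a probabilistic argument:
\begin{itemize}
\item choose the $\theta_j$ uniformly at random;
\item for each architecture, the probability that $f_0$ is $\epsilon$-close to the network class is at most $\cN(\epsilon)\cdot e^{-cN_j}$, since the projection onto block $j$ must land near a covering element;
\item a union bound over countably many integer parameter tuples, with the rounded class sizes, succeeds once $N_j\gtrsim SL\log(SB)$.
\end{itemize}
Concretely, for given $(S,L,B)$ we pick $j$ with $N_j\asymp C\,SL\log(SB)$. The approximation error is then at least the block-$j$ projection error, which is $\gtrsim j^{-2}\delta_{N_j}$.

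\textbf{Main obstacle.} The $j^{-2}$ factor costs only logarithms, but the stated bound has no extra logarithm. To remove it, we would replace the weights $j^{-2}$ by geometric weights in a slightly smaller $s'$, or better, choose the normalization $2^{-\epsilon j}$ absorbed into the separation. Failing that, we would place the $\log$ loss inside the $\log(SB)$ term, since $j\asymp\log(SL\log(SB))$ and $(\log)^{2}$ factors can be absorbed after raising to the power $-\frac12-\frac sd$ if constants permit.

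The hardest step is making this uniform-in-architecture construction rigorous, namely the union bound with $\epsilon$ depending on $j$. The fallback is to fix $\epsilon=\delta_{N_j}/4$ and use $\log(1/\epsilon)\lesssim \log N_j\lesssim \log(SB)$, which is absorbed into the $SL\log(SB)$ term.
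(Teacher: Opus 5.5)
\textbf{There is a genuine gap, and it sits in the step you call the main obstacle.} Even with all details repaired, your gluing construction gives only
\[
\inf_{g\in\NN(W,L,S,B)}\|f_0-g\|_{L^2(\Omega)}\gtrsim j^{-2}\delta_{N_j}\asymp \bigl(SL\log(SB)\bigr)^{-\frac12-\frac sd}\bigl(\log(SL\log(SB))\bigr)^{-2}.
\]
This is strictly weaker than the claimed bound, and none of your proposed repairs closes the difference.
\begin{itemize}
\item Geometric weights $2^{-\epsilon j}$, or passing to $s'<s$, lose a polynomial factor such as $N_j^{-\epsilon/d}$. That is worse than the logarithm.
\item A factor tending to zero multiplies the right-hand side of a lower bound, so it cannot be absorbed into the constant. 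Moving it inside the bracket replaces $\log(SB)$ by a larger logarithmic factor, which is a weaker statement.
\item Some loss is built into this route. You certify $f_0\in\cB^s_1(\Omega)$ by the triangle inequality, with every block $f^{(j)}_{\theta_j}$ of $\cB^s$-norm of order one. This forces $\sum_j w_j\lesssim 1$, hence $w_j\to 0$.
\end{itemize}
Two smaller repairs are also needed. The threshold must be $\epsilon\asymp j^{-2}\delta_{N_j}$ rather than $\delta_{N_j}/4$. And the union bound works not because the tuples are countable, but because only polynomially many $(S,L,\lceil\log_2 B\rceil)$ are assigned to each block $j$. Since those events depend only on $\theta_j$, you can choose each $\theta_j$ separately. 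As written, the proposal therefore proves a different statement: a single $f_0$, at a rate weaker by $\log^2$.

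\textbf{The paper proves the bound only in the worst-case sense.} It sets $\delta=\sup_{f\in\cB^s_1(\Omega)}\inf_{g\in\NN(W,L,S,B)}\|f-g\|_{L^2(\Omega)}+\varepsilon$, so the badly approximated function may depend on $(W,L,S,B)$. Your remark that the quantifiers in the statement ask for more is fair; the paper's argument does not deliver more. In this reading your steps (i) and (ii) already finish the proof. This is exactly the paper's argument, with $R^d$ in place of your $N$:
\begin{itemize}
\item Suppose every $f_\theta$ in your packing were within $\delta_N/3$ of the network class. Then the approximants would form a $\delta_N/3$-packing of $\NN(W,L,S,B)$ in $L^2(\Omega)$.
\item Hence $cN\le\log\cN(\delta_N/6,\NN(W,L,S,B),\|\cdot\|_{L^\infty(\Omega)})\lesssim SL\log(SB)+S\log N$.
\item Choosing $N\asymp C\,SL\log(SB)$ with $C$ large contradicts this.
\item So some $f_\theta$ has error at least $\delta_N/3\gtrsim (SL\log(SB))^{-\frac12-\frac sd}$.
\end{itemize}
You dismissed this argument as insufficient, but it is precisely what the paper establishes, and it closes your proof.
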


Note that the lower bound in Theorem \ref{thm:app lower bound 1} is different from the upper bound (\ref{app rate}), which does not depend on the depth $L$. However, in our construction, the depth $L\lesssim (\log S)^2$ and $B\lesssim 1$ for the constructed network that satisfies the upper bound (\ref{app rate}). For these networks, Theorem \ref{thm:app lower bound 1} gives the lower bound $(S(\log S)^3)^{-\frac{1}{2}-\frac{s}{d}}$, which implies that (\ref{app rate}) is optimal up to logarithmic factors.

We remark that we assume the weights in the network are uniformly bounded by $B$ in Theorem \ref{thm:app lower bound 1}. When this restriction is removed, we can derive another lower bound that characterizes the approximation error only using width and depth.

\begin{theorem}\label{thm:app lower bound 2}
For any $s>0$, there exists $f_0 \in \cB_1^s(\Omega)$ such that
$$
\inf_{f \in \NN(W,L)} \|f_0 - f\|_{L^2(\Omega)} \gtrsim \min\left\{W^2 L^2 \log(W L), W^3 L^2\right\}^{^{-\frac{1}{2}-\frac{s}{d}}} (\log (WL))^{^{-\frac{1}{2}-\frac{s}{d}}}.
$$ 
\end{theorem}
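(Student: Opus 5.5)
Our plan is to combine a lower bound on the $L^2$ metric entropy (or a packing number) of $\cB^s_1(\Omega)$ with an upper bound on the complexity of $\NN(W,L)$. Since the weights are now unbounded, we cannot count parameters, so we will use the pseudo-dimension (VC-dimension) instead. The first step is a packing construction inside $\cB^s_1(\Omega)$. Take $N$ frequencies $\omega_j$ on a grid of scale $m$ with $|\omega_j|\asymp m$, so that $N\asymp m^d$, and consider the real functions
\[
f_\theta(x)=c\,N^{-1}m^{-s}\sum_{j=1}^N \theta_j \cos(2\pi \omega_j\cdot x),\qquad \theta\in\{\pm1\}^N .
\]
We need these to lie in $\cB^s_1(\Omega)$ uniformly in $\theta$. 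The triangle inequality in the variation-space norm of the dictionary \eqref{dictionary}, or a direct Fourier computation after multiplying by a smooth cutoff, gives $\|f_\theta\|_{\cB^s}\lesssim 1$. Because the trigonometric system is orthogonal on $[0,1]^d$ when the $\omega_j$ are integer vectors, Varshamov--Gilbert yields $2^{cN}$ elements that are pairwise separated by $\gtrsim N^{-1/2}m^{-s}\asymp N^{-\frac12-\frac sd}$ in $L^2(\Omega)$.

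The second step reduces approximation to VC-dimension through a standard argument (as in Yarotsky's and Siegel's lower bounds). Suppose every $f_0\in\cB^s_1(\Omega)$ were approximated within $\epsilon\ll N^{-\frac12-\frac sd}$ by $\NN(W,L)$. Rather than work with $L^2$ directly, we discretize. Restrict to a grid of points $x_1,\dots,x_M$, with $M\asymp N$ suitably chosen, on which the $\cos(2\pi\omega_j\cdot x)$ form an orthogonal system by discrete Fourier orthogonality. An $L^2$ approximation then does not immediately give pointwise control, so we will instead produce a sign-shattering argument. A cleaner route is the approach of Siegel ("optimal approximation rates for deep ReLU networks on Sobolev/Besov spaces", lower bound section): one shows that if $\NN(W,L)$ approximates the whole class in $L^2$ to accuracy $\epsilon$, then the class $\{\sgn(f(x+\cdot))\}$, with a random shift averaging, shatters $\gtrsim N$ points. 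This forces $N\lesssim \mathrm{VCdim}$ of the network class with one extra input (for the shift), which is still a ReLU network of width $\asymp W$ and depth $L$.

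The third step applies the known VC/pseudo-dimension bounds for piecewise-linear networks. Bartlett et al.\ give $\mathrm{VCdim}\lesssim S L\log S$ with $S\lesssim W^2L$, which yields $W^2L^2\log(WL)$; the cruder bound $\lesssim S^2\lesssim W^4L^2$ is improved in the regime of small $L$ to $W^3L^2$ (equivalently $\lesssim W L\cdot S$ type). Taking the minimum gives $N\lesssim \min\{W^2L^2\log(WL),W^3L^2\}$ times a logarithmic loss from the averaging/discretization step. Then $\epsilon\gtrsim N^{-\frac12-\frac sd}$ yields the stated bound, including the extra factor $(\log(WL))^{-\frac12-\frac sd}$.

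The main obstacle is the second step: converting an $L^2$ (average) approximation of a single fixed $f_0$ into shattering. The statement asserts one function $f_0$ that is hard for all $(W,L)$ simultaneously. We will handle this in two moves. First, for fixed $N$, an averaging argument over $\theta$ shows that some $f_\theta$ cannot be approximated better than $c N^{-\frac12-\frac sd}$; otherwise most sign patterns on the grid would be realized by shifted networks, contradicting the VC bound via Sauer's lemma. Second, we glue the dyadic scales $N_k=2^{k}$ into one function $f_0=\sum_k k^{-2} f_{\theta^{(k)}}$, using disjoint frequency bands, which keeps $f_0\in\cB^s_1$ after normalization. Orthogonality across bands lets the scale-$k$ obstruction survive, at the cost of the polylogarithmic factor absorbed in the final logarithm.
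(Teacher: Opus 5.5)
Your second step does not work for the packing you built, and it is the step that carries the proof. The $f_\theta$ are trigonometric polynomials, so each value $f_\theta(x_i+z)$ mixes all the $\theta_j$, and its sign is not $\theta_j$. The random-shift argument of Siegel that you invoke needs disjointly supported bumps. For those, $f(x_i+z)$ equals $\epsilon_i$ times a fixed positive number, and $L^2$-closeness forces $\sgn g(x_i+z)=\epsilon_i$ for most $i$.

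With your construction, a good shift only tells you that $(g(x_i+z))_i$ is close in normalized $\ell^2$ to $(f_\theta(x_i+z))_i$. Discrete orthogonality then lets you decode $\theta$ only from fixed linear combinations $\sum_i c_{ij}\, g(x_i+z)$. These are not point evaluations of a network in $\NN(W,L)$, so no shattering follows, and Sauer's lemma with the Bartlett et al.\ bound cannot be applied.

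The missing ingredient is a bound on the $L^2$ covering numbers of the network class in terms of its pseudo-dimension, which is what the paper uses. Replace each approximant $g_i$ by $\cT_1 g_i$; it is no farther from $f_i$ since $|f_i|\le 1$, and truncation does not increase the pseudo-dimension, so $H\equiv 1$ is an envelope. Then \eqref{pseudo} turns the $2^{cR^d}$ separated functions of Lemma~\ref{lem:barron-separated-set} into $cR^d\lesssim P\log(1/\varepsilon_0)$, i.e.\ $P\gtrsim R^d/\log R$. Taking $R^d\asymp P\log P$ gives the theorem. No shift, grid or shattering is needed, and the factor $(\log(WL))^{-\frac12-\frac sd}$ comes from this $\log(1/\varepsilon_0)$, not from averaging.

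A VC route can be rescued only by switching to a bump packing $a\sum_k\epsilon_k\phi(R\,\cdot-k)$. Its $\cB^s$ norm is at most $C a R^s\|\sum_k\epsilon_k e^{-2\pi i k\cdot\eta}\|_{L^1([0,1]^d)}\le C a R^{s+d/2}$ for every sign pattern. So $a\asymp R^{-s-d/2}$ is admissible, and Siegel's argument then applies.

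Your gluing step is unnecessary and, as set up, would not give the stated bound. The paper proves the worst-case form: $\delta$ is a supremum over $\cB^s_1(\Omega)$, so the hard $f_0$ may depend on $(W,L)$. If you insist on a single $f_0$, the weights $k^{-2}$ cost an extra factor of order $(\log(WL))^{-2}$. That factor cannot be absorbed into the fixed exponent $-\frac12-\frac sd$ of the statement.

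Orthogonality across bands also helps only in an $L^2$ covering argument, where projecting onto band $k$ is a contraction and so does not increase covering numbers of $\cT_1\NN(W,L)$. In a pointwise sign argument, the lower bands have larger amplitudes, dominate the values $g(x_i+z)$, and destroy the scale-$k$ signs.
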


For fully-connected networks with depth $L\ge 2$, the number of parameters in the network is $S\asymp W^2L$. Hence, the lower bound in Theorem \ref{thm:app lower bound 2} is of a similar form as Theorem \ref{thm:app lower bound 1}. Furthermore, when the width $W$ is bounded, we have $S\asymp L$ and the approximation error is lower bounded by $(S^2 \log S)^{-1/2-s/d}$, which has a fast convergence rate than the upper bound (\ref{app rate}). It implies that it might be possible to improve the approximation rate by using very deep networks. Similar phenomenon, which is called supper approximation, has been widely studied for the approximation of classical smooth functions \citep{yarotsky2018optimal,shen2020deep,lu2021deep,siegel2023optimal,yang2025optimal}. It is an interesting problem to determine whether the supper approximation rate also holds for spectral Barron functions and we leave this for future study.

Now, let us come back to our nonparametric regression problem. In order to derive high probability bound for the convergence of the least square (\ref{least squares}) with deep neural network models, we make the following assumption on the noises: there exits constants $c_\eta \ge 1$ and $q>0$ such that
\begin{equation}\label{noise assumption}
\bP\left(|\eta_i|\le c_\eta t^q\right) \ge 1-e^t, \quad \forall t>1.
\end{equation}
Note that $q=1/2$ corresponds to sub-Gaussian noise and $q=1$ is sub-exponential \citep[Theorems 2.6 and 2.13]{wainwright2019high}, while bounded noises can be viewed as the case $q=0$. Thus, this is slightly more general than the usual sub-Gaussian assumption in the literature. For statistical analysis of learning algorithms, we often require that the hypothesis class is uniformly bounded. To achieve this, we define the truncation operator $\cT_B$ with level $B>0$ for real-valued functions $h$ as
\begin{equation}\label{trancation}
\cT_Bh(x) := 
\begin{cases}
h(x) &\quad \mbox{if }|h(x)|\le B, \\
\sgn(h(x)) B &\quad \mbox{if } |h(x)|> B.
\end{cases}
\end{equation}
For a function class $\cH$ containing real-valued functions, we denote $\cT_B \cH := \{\cT_Bh: h\in \cH\}$ for convenience. Observing that for $f\in \cB^s(\Omega)$, if $g\in \cB^s(\bR^d)$ is any extension of $f$, then we have
\[
|f(x)| \le \int_{\bR^d} |\cF g(\xi)| d\xi \le \int_{\bR^d} (1 + |\xi|)^s |\cF g(\xi)| d\xi = \|g\|_{\cB^s(\bR^d)},
\]
which implies $\|f\|_{L^\infty(\Omega)} \le \|f\|_{\cB^s(\Omega)}$. Thus, if the regression function $f_0\in \cB^s_1(\Omega)$, we can truncate the estimator $\widehat{f}_n$ by $\cT_1$, which can only reduce the generalization error since $\|\cT_1 \widehat{f}_n - f_0\|_{L^2(\mu)} \le \|\widehat{f}_n - f_0\|_{L^2(\mu)}$. We give convergence rate for the truncated estimator in the following theorem. The proof is deferred to Section \ref{rate proof}.

\begin{theorem}\label{thm:rate}
Assume that the noises satisfy (\ref{noise assumption}) and the regression function $f_0 \in \cB_1^s(\Omega)$ for some $s>0$. Let $\widehat{f}_n$ be the least squares estimator (\ref{least squares}) with hypothesis class $\cH_n=\NN(W_n,L_n,S_n,B_n)$. If we choose
\begin{align*}
W_n &\asymp n^{\frac{d}{2d+2s}} (\log n)^{-\frac{5d}{2d+2s}}, && L_n \asymp (\log n)^2, \\
S_n &\asymp  n^{\frac{d}{2d+2s}} (\log n)^{\frac{4s-d}{2d+2s}}, && B_n \asymp 1,
\end{align*}
then there exist constants $c_1,c_2,C>0$ such that for any $\rho \ge 1$,
\[
\|\cT_1\widehat{f}_n - f_0\|_{L^2(\mu)}^2 \le C(\rho + \log n)^{2q} n^{-\frac{d+2s}{2d+2s}} (\log n)^{\frac{7d+12s}{2d+2s}},
\]
holds with probability at least $1 - \exp(-\rho) - c_1 \exp(-c_2 n^{\frac{d}{2d+2s}}(\log n)^{\frac{7d+12s}{2d+2s}})$ over the dataset $\cD_n$.
\end{theorem}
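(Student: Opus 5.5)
We will prove Theorem \ref{thm:rate} with the usual split of the error into an approximation part and an estimation part. The approximation part comes from Theorem \ref{thm:app upper bound}. The estimation part comes from an oracle inequality for least squares over the truncated class $\cT_1\cH_n$, in which complexity is measured by pseudo-dimension (or covering numbers).

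\emph{Step 1 (noise truncation).} By (\ref{noise assumption}) and a union bound over $i=1,\dots,n$, with probability at least $1-ne^{-t}$ all noises satisfy $|\eta_i|\le c_\eta t^q$. Taking $t=\rho+\log n$ gives the event $\max_i|\eta_i|\le c_\eta(\rho+\log n)^q$ with probability at least $1-e^{-\rho}$. On this event the responses are bounded by $M_n:=1+c_\eta(\rho+\log n)^q$. This is the source of the factor $(\rho+\log n)^{2q}$ in the bound.

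\emph{Step 2 (oracle inequality).} Conditionally on bounded noise, we invoke a standard high-probability bound for the truncated least squares estimator, for instance the local Rademacher or chaining argument of \citet{schmidthieber2020nonparametric}, or the bounded-regression result of \citet{kohler2021rate}. It gives
\[
\|\cT_1\widehat f_n-f_0\|_{L^2(\mu)}^2\lesssim \inf_{f\in\cH_n}\|f-f_0\|_{L^\infty(\Omega)}^2+M_n^2\,\frac{\Pdim(\cH_n)\log n}{n}
\]
with probability at least $1-c_1\exp(-c_2 n\delta_n)$, where $\delta_n$ is the rate. Here $\Pdim(\cH_n)\lesssim S_nL_n\log S_n$ by the bound of Bartlett et al.\ for piecewise linear networks. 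Alternatively one can use the covering bound $\log\cN\lesssim S_nL_n\log(S_nB_n n)$, since $B_n\asymp 1$. Note that $n\delta_n\asymp n^{d/(2d+2s)}(\log n)^{(7d+12s)/(2d+2s)}$ matches the failure probability stated in the theorem. So the concentration step must be run at the level of the rate $\delta_n$ itself, for example through a peeling argument on shells $\{f:\|\cT_1 f-f_0\|^2\in[2^j\delta_n,2^{j+1}\delta_n]\}$ combined with Bernstein-type bounds for the empirical process.

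\emph{Step 3 (balancing).} Choose $\varepsilon$ in Theorem \ref{thm:app upper bound} so that the resulting $W,L,S$ fit within $W_n,L_n,S_n$. Inverting $S\asymp\varepsilon^{-2d/(d+2s)}(\log\frac1\varepsilon)^{(3d+4s)/(d+2s)}$ with $S=S_n$ and $\log\frac1\varepsilon\asymp\log n$ gives
\[
\varepsilon^2\asymp n^{-\frac{d+2s}{2d+2s}}(\log n)^{\frac{4s-d}{2d+2s}\cdot\frac{-(d+2s)}{d}+\frac{3d+4s}{d}}.
\]
We then simplify the exponent and check that it is at most $(7d+12s)/(2d+2s)$. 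For the estimation term, $S_nL_n\log n/n$ equals $n^{-(d+2s)/(2d+2s)}(\log n)^{(4s-d)/(2d+2s)+3}$, whose log exponent is $(5d+10s)/(2d+2s)\le(7d+12s)/(2d+2s)$. We also need to confirm that the stated $W_n$ dominates the width required by Theorem \ref{thm:app upper bound}. If the constants do not match, we enlarge $W_n$ within the same order, since $W_n$ enters the complexity only through $S_n$.

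\emph{Main obstacle.} The hardest step is Step 2 with unbounded noise. Two things need care: the conditioning on the event of bounded noise must preserve the i.i.d.\ structure, and the probability bound must have exactly the form $\exp(-c_2 n\delta_n)$ rather than one depending on $\rho$. To handle this, we first replace $Y_i$ by $\cT_{M_n}Y_i$ and note that the least squares problem is unchanged on the good event. We then apply a Bernstein inequality with range $M_n^2$ to this bounded problem, and absorb the bias introduced by truncating the noise into the $(\rho+\log n)^{2q}$ factor.
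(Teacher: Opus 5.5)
Your plan is workable, but it follows a different route from the paper.

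\textbf{The paper's route.} The paper never proves a random-design oracle inequality with noisy losses. It bounds the error by $2\|\cT_2(\widehat f_n-f^*)\|_{L^2(\mu)}^2+2\varepsilon_{app}^2$. It then moves from $L^2(\mu)$ to $\|\cdot\|_n$ using the uniform law of Lemma~\ref{uniform law} on the noise-free, $2$-bounded class $\star(\cT_2\NN(2W_n,L_n,2S_n,B_n))$. Finally, it controls the empirical error with the fixed-design oracle inequality of Lemma~\ref{oracle inequality} on $\cA_n$. So only the local complexity bound of Theorem~\ref{local Gc bound} has to be supplied, and the $\rho$-free failure probability comes from $\delta_n\propto M_n$.

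\textbf{What your route buys and costs.} Replacing $Y_i$ by $\tilde Y_i=\cT_{M_n}Y_i$ keeps the sample i.i.d.\ and bounded. The price is a negligible bias $\sup_x|\bE[\tilde Y\mid X=x]-f_0(x)|\lesssim(\rho+\log n)^qe^{-\rho}/n$. This is cleaner than the paper's conditioning on $\cA_n$, where the conditioned noise is tacitly treated as centred. However, your Step 2 is then not off the shelf. \citet{schmidthieber2020nonparametric} and \citet{kohler2021rate} bound expectations, not high-probability errors. The classical uniform deviation inequality of Gy\"orfi et al.\ has $B^4$ rather than $B^2$ in the exponent. Used verbatim, it would either make the failure probability depend on $\rho$ or inflate the factor to $(\rho+\log n)^{4q}$.

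\textbf{How to close Step 2.} You have to carry out the Bernstein argument you sketch yourself. Use the data-independent sup-norm cover (\ref{nn up covering}) rather than pseudo-dimension, which avoids the symmetrization step. Set $g_f=(f(X)-\tilde Y)^2-(\tilde f_0(X)-\tilde Y)^2$ with $\tilde f_0=\bE[\tilde Y\mid X]$ and $|f|\le M_n$. Then $|g_f|\le 8M_n^2$ and $\operatorname{Var}(g_f)\le 16M_n^2\,\bE g_f$. Let $r_n=n^{-\frac{d+2s}{2d+2s}}(\log n)^{\frac{7d+12s}{2d+2s}}$ and take $\alpha\asymp M_n^2r_n$. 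A cover at scale $\asymp M_nr_n$ has entropy $\lesssim S_nL_n\log n\ll nr_n$. Bernstein plus a union bound then gives failure probability $\exp(-cnr_n)$, exactly the stated form.

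\textbf{Two smaller corrections.} First, the basic inequality must be applied to $\cT_{M_n}\widehat f_n$, not to $\cT_1\widehat f_n$ as your shells suggest. Truncating at level $1$ can increase the empirical risk when $|\tilde Y_i|$ is as large as $M_n$. Pass to $\cT_1$ only at the end, via $|\cT_1\widehat f_n-f_0|\le|\cT_{M_n}\widehat f_n-f_0|$, which holds since $|f_0|\le 1$. Second, in Step 3 the estimation term is $S_nL_n(\log n)^2/n$ once you keep the $\log S_n$ from the pseudo-dimension bound. Its logarithmic exponent is therefore exactly $(7d+12s)/(2d+2s)$, matching the approximation term, not $(5d+10s)/(2d+2s)$.
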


While the approximation theory of spectral Barron functions by neural networks have been studied in several recent works \citep{klusowski2018approximation,siegel2022high,ma2022uniform,liao2025spectral}, we do not aware any generalization error bounds for learning these functions in the literature. It is well-known that the minimax rate for leaning Sobolev and Besov functions with smoothness $s$ is $n^{-\frac{2s}{2s+d}}$, which suffers from the curse of dimensionality. In contrast, Theorem \ref{thm:rate} shows that leaning spectral Barron space $\cB^s(\Omega)$ converges in the rate $\widetilde{\cO}(n^{-\frac{d+2s}{2d+2s}})$, which is always faster than $n^{-1/2}$ and approaches the rate $n^{-1}$ when $s\to \infty$. This result shows that deep neural networks are able to adaptive to the non-classical smoothness defined through spectral Barron functions, even in high dimension.

Finally, we show that the rate in Theorem \ref{thm:rate} is minimax optimal up to logarithmic factors. The proof is given in Section \ref{optimality}.

\begin{theorem}\label{thm:minimax lower bound}
Assume that $\mu$ is the uniform distribution on $\Omega$ and the noise $\eta_i \sim \cN(0,\sigma^2)$ is Gaussian with variance $\sigma^2>0$. For any $s>0$, there exists a constant $c>0$ depending only on $s,d$ and $\sigma^2$ such that
\[
\inf_{\widehat{f}} \sup_{f_0 \in \cB_1^s(\Omega)} \bE_{\cD_n}\left[ \|\widehat{f} - f_0\|_{L^2(\Omega)}^2 \right] \ge c \left(n(\log n)^{1+\frac{2s}{d}}\right)^{-\frac{d+2s}{2d+2s}},
\]
where the infimum is taken over all measurable estimators based on the dataset $\cD_n$.
\end{theorem}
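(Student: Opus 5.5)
We prove the minimax lower bound with the standard Fano/Varshamov--Gilbert argument: we build a packing of $\cB_1^s(\Omega)$ that is well separated in $L^2(\Omega)$ and whose members have small pairwise Kullback--Leibler divergence. The natural building blocks are the dictionary elements in (\ref{dictionary}). Fix a frequency scale $K\ge 1$ and a set of frequencies $\Lambda\subset \bZ^d$ with $K\le|\omega|_\infty\le 2K$, lying in a half-space so that $\omega\neq -\omega'$. Then $|\Lambda|=m\asymp K^d$, and the real functions $\cos(2\pi\omega\cdot x)$, $\omega\in\Lambda$, are orthogonal in $L^2(\Omega)$ with norm $1/\sqrt2$. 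For $\tau\in\{0,1\}^m$ set
\[
f_\tau(x) = \frac{c_0}{m}\,K^{-s}\sum_{\omega\in\Lambda}\tau_\omega \cos(2\pi\omega\cdot x).
\]
Each cosine is a restriction of a tempered distribution with Fourier transform a pair of Dirac masses. We interpret $\|\cdot\|_{\cB^s(\bR^d)}$ for such measures via total variation, or we approximate by a smooth compactly supported window times the cosine to stay within the definition. Either way, $\|f_\tau\|_{\cB^s(\Omega)}\le \frac{c_0}{m}K^{-s}\cdot m\,(1+2\sqrt d K)^s\lesssim c_0$, so with $c_0$ small every $f_\tau\in\cB^s_1(\Omega)$. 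We will verify the windowing step carefully: multiply the extension by a Schwartz function $\psi$ equal to $1$ on $\Omega$. Then $\cF(\psi\cos)$ is a translate of $\cF\psi$, and its weighted $L^1$ norm is $\lesssim (1+K)^s$ uniformly.

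\emph{Separation and KL.} By orthogonality, $\|f_\tau-f_{\tau'}\|_{L^2(\Omega)}^2 = \frac{c_0^2}{2m^2}K^{-2s}\,d_H(\tau,\tau')$. Varshamov--Gilbert gives $M\ge 2^{m/8}$ vectors with pairwise Hamming distance at least $m/8$. This yields separation $\delta^2\asymp m^{-1}K^{-2s}\asymp K^{-d-2s}$. For Gaussian noise with uniform $\mu$, the KL divergence between the data laws is $\frac{n}{2\sigma^2}\|f_\tau-f_{\tau'}\|_{L^2}^2\le \frac{n}{2\sigma^2}\,c_0^2 m^{-1}K^{-2s}\asymp nK^{-d-2s}$.

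\emph{Balancing.} Fano requires the KL bound to be at most $\alpha\log M\asymp m\asymp K^d$. So we need $nK^{-d-2s}\lesssim K^d$, i.e.\ $K\asymp n^{1/(2d+2s)}$. Then the minimax risk is $\gtrsim \delta^2\asymp K^{-d-2s}= n^{-\frac{d+2s}{2d+2s}}$. This is in fact slightly stronger than the stated bound, which has an extra logarithmic loss. The log factor in the statement suggests the authors instead use a metric-entropy (Yang--Barron) argument based on the entropy of $\cB^s_1$. Since our direct construction only improves the bound, it suffices.

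\emph{Main obstacle.} The delicate step is making the membership $f_\tau\in\cB_1^s(\Omega)$ rigorous, because pure cosines are not in $L^1(\bR^d)$. Coefficients $1/m$ are forced so that the $\ell^1$-type Barron norm stays bounded. This is what limits separation to $m^{-1/2}K^{-s}$ rather than $K^{-s}$, and the smooth-cutoff extension must be checked to cost only a constant factor. If this creates difficulty, the fallback is to cite the equivalence with the variation space of (\ref{dictionary}) from \citet{siegel2023characterization}. Under that equivalence, any normalized sum of at most unit total coefficient mass of dictionary elements lies in a fixed multiple of the unit ball.
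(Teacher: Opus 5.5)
Your proposal is correct. It reaches a stronger, log-free bound by a route that differs from the paper's at the key step, the control of the mutual information.

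The paper uses essentially the same packing as you: Lemma~\ref{lem:barron-separated-set} is your $f_\tau$ construction, with weights $(1+|\omega|)^{-s}$ in place of $K^{-s}$. However, it uses the packing only to lower bound $\log M$. It bounds $I(Z;J)$ globally through the Yang--Barron inequality (\ref{eq:yang barron}), which requires an upper bound on the $L^2$ covering numbers of the whole ball $\cB^s_1(\Omega)$. That bound comes from the approximation by $f_{N,T}$ plus a discretization of its parameters, and it reads $\log\cN(\varepsilon)\lesssim\varepsilon^{-\frac{2d}{d+2s}}\log(1/\varepsilon)$. The extra $\log(1/\varepsilon)$ is exactly what produces the $(\log n)^{1+\frac{2s}{d}}$ loss in the statement.

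You instead run a local Fano argument. Your hypercube family has $L^2$ diameter of the same order as its separation, namely $m^{-1/2}K^{-s}$. So the mutual information is bounded directly by the largest pairwise KL divergence, which is $\lesssim nK^{-d-2s}$, and no covering estimate for the full class is needed. With $K=\lceil C n^{1/(2d+2s)}\rceil$ and $C$ large enough (depending on $c_0,\sigma,d,s$), this gives the rate $n^{-\frac{d+2s}{2d+2s}}$.

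That rate implies the theorem; the case $n=2$, where $\log n<1$, is absorbed into the constant. It also matches the upper bound of Theorem~\ref{thm:rate}, with logarithmic factors appearing only on the upper-bound side. Your argument is thus more elementary and self-contained. The paper's global method is more generic, since it needs only two-sided entropy estimates of the class, but it loses a log factor here because its entropy upper bound is not sharp.

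Finally, your Schwartz-window check of $f_\tau\in\cB^s_1(\Omega)$ justifies membership more cleanly than the formal Dirac-mass computation in the paper. Using $1+|\eta\pm\omega|\le(1+|\eta|)(1+|\omega|)$, the only cost is the constant $\|\psi\|_{\cB^s(\bR^d)}$.
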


\section{Approximation error} \label{app proof}

In this section, we first prove Theorem \ref{thm:app upper bound} by explicitly constructing a ReLU neural network with small approximation error. We then complement this result by establishing the lower bounds in Theorems \ref{thm:app lower bound 1} and \ref{thm:app lower bound 2}.

\subsection{Upper bounds}

Let us begin with a review on the results of \citet{siegel2022high} and \citet{ma2022uniform}. By using Poisson summation formula, \citet[Corollary 1]{siegel2022high} showed that, if $f_0\in \cB_1^s(\Omega)$, then for any $L_0>1$, there exists an $\alpha \in L_0^{-1}[0,1]^d$ (potentially depending upon $f_0$) such that $f_0$ can be represented as an infinite series:
\[
f_0(x)=\sum_{\omega\in \alpha+L_0^{-1}\mathbb Z^d} c_\omega(1+|\omega|)^{-s}e^{2\pi i\omega\cdot x},
\]
where the coefficients $c_\omega$ satisfy
\[
\sum\limits_{\omega \in  \alpha+L_0^{-1} \mathbb{Z}^d} |c_{\omega}| \lesssim \|f\|_{\cB^s(\Omega)} \le 1.
\]
Using this representation, \citet[Theorem 2]{ma2022uniform} showed that $f_0$ can be approximated by finite linear combinations of elements from the dictionary (\ref{dictionary}). Specifically, for any integer $N\ge 2$, there exists
\[
f_N(x) = \sum_{q=1}^{N} C_q (1+|\omega_q|)^{-s} e^{2\pi i\omega_q\cdot x}\quad  \mbox{with} \quad \sum_{q=1}^{N} |C_q|\lesssim 1,
\]
such that
\begin{equation}\label{temp1}
\|f_0 - f_N\|_{L^\infty(\Omega)} \lesssim N^{-\frac12-\frac{s}{d}}\sqrt{\log N}.
\end{equation}

Recall that we have assumed $f_0$ is real-valued, hence we can simply approximate $f_0$ by the real part of $f_N$. If we write $C_q=A_q-iB_q$ with $A_q,B_q\in\mathbb R$, then, using Euler's formula,
\[
\operatorname{Re}\left(C_q e^{2\pi i\omega_q\cdot x}\right)= A_q\cos(2\pi\omega_q\cdot x)+B_q\sin(2\pi\omega_q\cdot x).
\]
Thus there exist real coefficients $A_q,B_q$ with
$|A_q|,|B_q|\le |C_q|$ such that
\[
\operatorname{Re} f_N(x)=\sum_{q=1}^{N} (1+|\omega_q|)^{-s} \left(A_q\cos(2\pi\omega_q\cdot x)+B_q\sin(2\pi\omega_q\cdot x)\right),
\quad
\sum_{q=1}^{N}\left(|A_q|+|B_q|\right)\lesssim 1.
\]
In our construction, we will need to truncate the frequencies $|\omega_q|\le T$ by some appropriately chosen $T>0$. Thus, we define
\begin{equation}\label{f_NT}
f_{N,T} = \sum_{|\omega_q|\le T, 1\le q\le N} (1+|\omega_q|)^{-s} \left(A_q\cos(2\pi\omega_q\cdot x)+B_q\sin(2\pi\omega_q\cdot x)\right).
\end{equation}
Without loss generality, we may assume that the above summation is over $1\le q\le N^*$ for some $N^*\le N$. For this truncation, the approximation error is
\begin{align*}
\| \operatorname{Re} f_N - f_{N,T} \|_{L^\infty(\Omega)} &\le \sum_{|\omega_q|> T, 1\le q\le N} (1+|\omega_q|)^{-s}(|A_q|+|B_q|) \\
&\le (1+T)^{-s} \sum_{q=1}^{N} (|A_q|+|B_q|) \\
&\lesssim (1+T)^{-s}.
\end{align*}
Let us denote the error in (\ref{temp1}) by $\cE_N= N^{-\frac12-\frac{s}{d}}\sqrt{\log N}$ and choose $T=\cE_N^{-1/s}$. Then,
\begin{equation}\label{temp2}
\begin{aligned}
\| f_0 - f_{N,T} \|_{L^\infty(\Omega)} &\le \|f_0 - \operatorname{Re} f_N\|_{L^\infty(\Omega)} + \| \operatorname{Re} f_N - f_{N,T} \|_{L^\infty(\Omega)} \\
&\lesssim \cE_N + (1+T)^{-s} \\
&\lesssim \cE_N.
\end{aligned}
\end{equation}

\begin{remark}\label{app remark}
If we measure the approximation error in $L^2(\Omega)$ norm, we can get a slightly better bound $\|f_0 - f_N\|_{L^2(\Omega)} \lesssim N^{-\frac12-\frac{s}{d}}$ by \citep[Theorem 1]{siegel2022high}. Using this bound in the following analysis will give some slight improvements for our approximation results, because we can choose $\cE_N \asymp N^{-\frac12-\frac{s}{d}}$ in (\ref{temp2}). However, in nonparametric regression, since the marginal distribution $\mu$ is unknown, we can only bound the $L^2(\mu)$ approximation error by the uniform error rather than the $L^2(\Omega)$ error, unless we assume that $\mu$ is absolutely continuous with respect to the Lebesgue measure with bounded density.
\end{remark}

We have reduced the problem of approximating $f_0$ by neural networks to the approximation of $f_{N,T}$. To approximate $f_{N,T}$, it is necessary to approximate the cosine function by ReLU networks. This can be done by the following lemma, which is adapted from \citet[Theorem 4.1]{perekrestenko2018universal}

\begin{lemma}\label{lem:cos}
There exist absolute constants $c_1,c_2,c_3>0$ such that for any $\varepsilon\in (0,1)$ and any $D\ge 1$, there exists a neural network $f\in \NN(W,L,S,B)$ with $W = 21$, $L \leq c_1 (\log (1/\varepsilon))^2 + c_2 \log D $,  $S \le 462L$,  $B \le c_3$ such that
	$$
	\|f - \cos(\cdot )\|_{L^\infty([-D,D])} \le \varepsilon.
	$$
\end{lemma}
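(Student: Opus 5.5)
We will prove Lemma \ref{lem:cos} in two stages: first approximate $\cos$ on a fixed interval such as $[-\pi,\pi]$ by a deep narrow ReLU network, then extend to $[-D,D]$ by implementing periodic range reduction with a sawtooth network whose depth grows like $\log D$.

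\emph{Stage 1 (fixed interval).} We follow \citet[Theorem 4.1]{perekrestenko2018universal}, which rests on Yarotsky's construction. The squaring map $x\mapsto x^2$ on $[0,1]$ is approximated to accuracy $4^{-m}$ by $x-\sum_{k=1}^m 4^{-k} g_k(x)$, where $g_k$ is the $k$-fold composition of the hat function. This yields a width-bounded network of depth $\cO(m)$ with weights bounded by an absolute constant, provided the partial sum is carried along through an extra channel. Multiplication follows from $xy=\frac12((x+y)^2-x^2-y^2)$ with the same accuracy and depth. We then take the Taylor polynomial of $\cos$ of degree $n\asymp \log(1/\varepsilon)$ on $[-\pi,\pi]$, whose truncation error is at most $\pi^{n}/n!\le\varepsilon/2$ after rescaling the input to $[-1,1]$. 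We evaluate it with a Horner-type scheme using $n$ sequential approximate multiplications, each of accuracy $\varepsilon/(Cn)$, so each has depth $\cO(\log(n/\varepsilon))=\cO(\log(1/\varepsilon))$. This gives total depth $\cO((\log(1/\varepsilon))^2)$ at constant width. The main technical check is error propagation: all intermediate quantities stay in a bounded range, the multiplication network is Lipschitz on that range with constant $\cO(1)$, and the Taylor coefficients are at most $1$. Consequently the errors add up linearly, to $\cO(n\cdot\varepsilon/(Cn))$, rather than amplifying. Because the Taylor coefficients $1/k!$ are small, the weights stay bounded by an absolute constant.

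\emph{Stage 2 (range reduction).} Since $\cos$ is $2\pi$-periodic and even, we have $\cos(x)=\cos(\pi\, \tau(x))$, where $\tau(x)$ is the triangle wave of period $2\pi$ that maps $x$ into $[0,1]$ with $\tau(x)=|x|/\pi$ for $|x|\le\pi$. Write $x=2\pi\cdot 2^{k}t$ with $t\in[-1,1]$ and $2^k\ge D/(2\pi)$. Then $\tau(x)=h^{\circ k}(\tau_0(t))$, where $h(u)=1-|1-2u|$ is the hat function on $[0,1]$, realised by $2\sigma(u)-4\sigma(u-1/2)$. Each composition of $h$ costs one layer of width $2$ with bounded weights, so $\tau$ is computed exactly with width $\cO(1)$, depth $k+\cO(1)=\cO(\log D)$, and weights bounded by a constant. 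The only place a large weight would appear is the rescaling $x\mapsto x/(2\pi 2^k)$; a factor $1/2^k\le 1$ is fine under the bound $B\le c_3$.

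Composing the Stage 2 network with the Stage 1 network applied to $\pi\tau(x)\in[0,\pi]$ gives the uniform bound $\varepsilon$ on $[-D,D]$. Because $\tau$ is computed exactly, no error is amplified. The resulting depth is $L\le c_1(\log(1/\varepsilon))^2+c_2\log D$. The width is bounded by a fixed constant, which we can track to $21$ by counting the channels needed in the squaring and multiplication subnetworks together with the carried variables. The sparsity bound $S\le 462L$ then follows from $S\le (W^2+W)L$ with $W=21$, since $21\cdot 22=462$. We expect the main obstacle to be Stage 1's bookkeeping: keeping the width at an explicit constant while threading the Horner recursion through depth, and verifying the error does not compound. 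Stage 2 is routine.
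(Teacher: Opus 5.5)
Your proposal is correct and is essentially the construction behind \citet[Theorem 4.1]{perekrestenko2018universal}, which the paper simply cites. That construction uses a Taylor polynomial evaluated with Yarotsky-type approximate multiplication on a fixed interval, extends it to $[-D,D]$ by exact sawtooth range reduction, and gets the sparsity bound from the same width-$21$ parameter count as the paper, $S\le W(W+1)(L-1)+3W+1\le 462L$. One harmless slip: after rescaling the input to $[-1,1]$ the Taylor coefficients become $\pi^k/k!$, which are bounded by an absolute constant (below $6$) rather than by $1$, so the weight bound is still $\cO(1)$.
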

\begin{proof}
\citet[Theorem 4.1]{perekrestenko2018universal} constructed the desired network for the given $W,L$ and $B$. The main idea is to approximate the cosine function by its Taylor expansion and use the fact that polynomials can be efficiently approximated by ReLU networks as shown by \citet{yarotsky2017error}. Since the width $W=21$ is fixed, the number of parameters in the network is at most $S \le 2W+W(W+1)(L-1)+W+1 \le 462L$.
\end{proof}

Using Lemma \ref{lem:cos}, we can construct a neural network to approximate the function $F_{N,T}$ and hence give  an approximation of the target function $f_0$. The construction is given in the following proof.

\begin{proof}[Proof of Theorem \ref{thm:app upper bound}]
Recall that $f_{N,T}$ is defined by (\ref{f_NT}). For each $q$, we denote the linear function
$$
t_q(x):=2\pi\omega_q\cdot x = \sum_{j=1}^d 2\pi \omega_{q,j}x_j,
$$
for $x=(x_1,\dots,x_d) \in \Omega =[0,1]^d$ and $\omega_q=(\omega_{q,1},\dots,\omega_{q,d})$. We note that a direct computation of the dot product using linear layer would result in parameter weights scaling with $T$, leading to an undesirable dependence on the frequency magnitude. To avoid this, we observe that
\[
\sigma_\ell(t) = \underbrace{\sigma(2 \sigma(2 \cdots \sigma(2t)))}_{\ell \mbox{ layers}} = 2^\ell t, \quad t\ge 0.
\]
Hence, we can compute $t_q(x)$ using these networks as 
\[
t_q(x) = \sum_{j=1}^d 2\pi \omega_{q,j}x_j = \sum_{j=1}^d \frac{2\pi \omega_{q,j}}{2^\ell} \sigma_\ell(x_j),\quad x\in \Omega.
\]
In other words, $t_q$ is a linear combination of $d$ subnetworks $\sigma_\ell$. By choosing $\ell = \lceil \log_2(\pi T)\rceil$, we are guaranteed that all coefficients $2\pi \omega_{q,j}/2^{\ell} \le 2$. It is easy to see that $\sigma_\ell \in \NN(1,\ell,\ell+1,2)$. Placing these $d$ subnetworks in parallel and computing the linear combination in the output layer show that $t_q\in \NN(W_{\rm lin},L_{\rm lin},S_{\rm lin},B_{\rm lin})$
with width $W_{\rm lin}=d$, depth $L_{\rm lin} = d\ell = d \lceil \log_2(\pi T)\rceil \lesssim \log T$, sparsity $S_{\rm lin} = d(\ell +1) \lesssim \log T$, and weight magnitude $B_{\rm lin}=2$. 

Since $x\in \Omega$ and $|\omega_q|\le T$, we have $|t_q(x)|\le 2\pi dT=:D_0$. Let $\cE_N>0$ be sufficiently small and denote $D_0:=D+\pi/2$. By Lemma \ref{lem:cos}, there exist constants $c_1,c_2,c_3>0$ and a neural network $\Phi_{\cos}\in\NN(W_{\cos},L_{\cos},S_{\cos},B_{\cos})$ with
$W_{\cos}=21$, $L_{\cos}\le c_1 (\log (1/\cE_N))^2+c_2\log D_0$, $S_{\cos}\le 462L_{\cos}$, $B_{\cos}\le c_3$,
such that
$$
\left\|\Phi_{\cos}(\cdot)-\cos (\cdot)\right\|_{L^\infty([-D_0,D_0])}
\le \cE_N.
$$
For the sine function, we use the identity $\sin\theta=\cos(\theta-\pi/2)$ and define
$$
\Phi_{\sin}(t):=\Phi_{\cos}(t-\pi/2).
$$
Since $t\in[-D,D]$ implies $t-\pi/2\in[-D_0,D_0]$, we obtain
$$
\left\|\Phi_{\sin}(\cdot)-\sin (\cdot)\right\|_{L^\infty([-D,D])}
=
\left\|\Phi_{\cos}(\cdot-\pi/2)-\cos(\cdot-\pi/2)\right\|_{L^\infty([-D,D])}
\le \cE_N.
$$
Thus the sine subnetwork shares the same asymptotic complexity as the cosine subnetwork.  Consequently, we may use the same parameter notation $L_{\cos}$, $W_{\cos}$, $S_{\cos}$, $B_{\cos}$ for both the cosine and sine subnetworks without loss of generality.

We now combine the linear layer and the trigonometric subnetworks to define our approximator
$$
f^*(x):=\sum_{q=1}^{N^*} (1+|\omega_q|)^{-s} \left(A_q\Phi_{\cos}(t_q(x))+B_q\Phi_{\sin}(t_q(x))\right),
$$
where we have assumed that $\{1\le q\le N:|\omega_q|\le T\} = \{1,\dots,N^*\}$ in (\ref{f_NT}). The approximation error of $f^*$ to $f_{N,T}$ can be bounded as
\begin{align*}
&\|f_{N,T}-f^*\|_{L^\infty(\Omega)} \\
\le&\ \sum_{q=1}^{N^*} \left(|A_q| \max_{x\in \Omega} \left|\cos(2\pi\omega_q\cdot x) - \Phi_{\cos}(t_q(x))\right| +|B_q| \max_{x\in \Omega} \left|\sin(2\pi\omega_q\cdot x) - \Phi_{\sin}(t_q(x))\right| \right) \\
\le&\ \sum_{q=1}^{N^*}\left(|A_q|+|B_q|\right)\cE_N
\lesssim \cE_N.
\end{align*}
Combining this bound with (\ref{temp2}), we get
$$
\|f_0-f^*\|_{L^\infty(\Omega)}
\le \|f_0-f_{N,T}\|_{L^\infty(\Omega)}+\|f_{N,T}-f^*\|_{L^\infty(\Omega)}
\lesssim \cE_N.
$$
Given any small $\varepsilon>0$, we can choose $\cE_N \lesssim \varepsilon$ such that the above approximation error is bounded by $\varepsilon$. Recall that $\cE_N= N^{-\frac12-\frac{s}{d}}\sqrt{\log N}$ and $T=\cE_N^{-1/s}$. Thus, we can choose
\[
N \asymp \varepsilon^{-\frac{2d}{d+2s}} \left(\log (1/\varepsilon) \right)^{\frac{d}{d+2s}},\quad T\asymp \varepsilon^{-1/s}.
\]

It remains to estimate the size of the network $f^*\in \NN(W,L,S,B)$. Since we can compute $2N^*$ subnetworks $\Phi_{\cos}(t_q(x))$ and $\Phi_{\sin}(t_q(x))$ in parallel and use the output layer to compute the linear combination. The width $W$ is the sum of the widths of these $2N^*$ subnetworks, while the width of each subnetwork is bounded by the maximum width of $t_q$ and $\Phi_{\cos}$. Thus,
\[
W = 2N^* \max\{W_{\rm lin}, W_{\cos} \} \lesssim N \lesssim \varepsilon^{-\frac{2d}{d+2s}} \left(\log (1/\varepsilon) \right)^{\frac{d}{d+2s}}.
\]
The depth $L$ can be bounded by maximum depth among all parallel subnetworks, while the depth of each subnetwork is bounded by the sum of depths of $t_q$ and $\Phi_{\cos}$. Consequently,
\[
L = L_{\rm lin} + L_{\cos} \lesssim \log T + (\log (1/\cE_N))^2 + \log D_0 \lesssim (\log (1/\varepsilon))^2.
\]
The sparsity $S$ of $f^*$ can be bounded by the sum of the parameters in all $2N^*$ subnetworks:
\begin{align*}
S &\lesssim 2 N^* (S_{\rm lin} + S_{\cos}) \lesssim N(\log T + (\log (1/\cE_N))^2 + \log D_0 ) \\
&\lesssim \varepsilon^{-\frac{2d}{d+2s}} \left(\log (1/\varepsilon) \right)^{\frac{3d+4s}{d+2s}}.
\end{align*}
Finally, the maximum magnitude is
\[
B \le \max \left\{ B_{\rm lin}, B_{\cos}, (1+|\omega_q|)^{-s}|A_q|, (1+|\omega_q|)^{-s}|B_q| \right\} \lesssim 1,
\]
since we have $|A_q|, |B_q|\lesssim 1$.
\end{proof}

\subsection{Lower bounds}
In this section we establish two lower bounds for the approximation error, corresponding to two different network architecture. The first bound is for networks with sparsity and weight constraints. The second lower bound can be applied for any ReLU networks with bounded width and depth. Our lower bounds are derived via metric entropy arguments. We begin by recalling the definitions of covering and packing numbers.

\begin{definition}(Metric entropy). \label{def:cov}
Let $\rho$ be the metric on a set $\mathcal{X}$ and $S \subset \mathcal{X}$. For $\varepsilon > 0$, a set $T \subset \mathcal{X}$ is called an $\varepsilon$-cover of $S$ if for any $x \in S$ there exists $y \in T$ such that $\rho(x,y) \leq \varepsilon$. A subset $U \subset S$ is called an $\varepsilon$-packing (or $\delta$-separated subset) of $S$ if any two distinct elements $x, y \in U$ satisfy $\rho(x,y) > \varepsilon$. The $\varepsilon$-covering and $\varepsilon$-packing numbers of $S$ are defined respectively by
\begin{align*}
\mathcal{N}(\varepsilon, S, \rho) &:= \min\{ |T| : T \text{ is an } \varepsilon\text{-cover of } S \}, \\
\mathcal{M}(\varepsilon, S, \rho) &:= \max\{ |U| : U \text{ is an } \varepsilon\text{-packing of } S \}.
\end{align*}
The quantities $\log \mathcal{N}(\varepsilon, S, \rho)$ and $\log \mathcal{M}(\varepsilon, S, \rho)$ are called metric entropy of $S$.
\end{definition}
It is well known that $\mathcal{M}(2\varepsilon,S,\rho) \leq \mathcal{N}(\varepsilon,S,\rho) \leq \mathcal{M}(\varepsilon,S,\rho)$. In many cases, the metric $\rho$ is induced by a norm $\|\cdot\|$ and we denote $\cN(\varepsilon, S, \|\cdot\|)$ and $\cM(\varepsilon, S, \|\cdot\|)$ for convenience.

We derive approximation lower bound by comparing some upper bounds on the packing number of the neural network class and lower bound on the packing number of spectral Barron space. \citet[Lemma 21]{nakada2020adaptive} showed that the metric entropy of $\NN(W,L,S,B)$ can be upper bounded as
\begin{equation}\label{nn up covering}
\log \cN\left(\varepsilon,\NN(W,L,S,B),\|\cdot\|_{L^{\infty}(\Omega)}\right) 
\leq S \log\left(\frac{2LB^L(S+1)^L}{\varepsilon}\right).
\end{equation}
Since $\|f\|_{L^2(\Omega)} \leq \|f\|_{L^\infty(\Omega)}$ for any $f$, any covering in the $L^\infty$ norm is also a covering in the $L^2$ norm. Combining this with the inequality $\mathcal{M}(2\varepsilon,S,\rho) \leq \mathcal{N}(\varepsilon,S,\rho)$, we immediately obtain 
\begin{equation}\label{nn up packing}
\log \mathcal{M}\left(\varepsilon,\NN(W,L,S,B),\|\cdot\|_{L^{2}(\Omega)}\right) 
\leq S \log\left(\frac{4LB^L(S+1)^L}{\varepsilon}\right) .
\end{equation}

Next, we construct a family of functions   that are well separated so that they are difficult to be approximated by neural networks. The construction is a modification of the method in \citet[Proof of Theorem 6]{siegel2022high}. There the authors analyzed the dictionaries of complex exponential functions. Here, we instead consider real-valued spectral Barron spaces generated by dictionaries of cosine functions.

\begin{lemma}\label{lem:barron-separated-set}
	Let $s>0$ and $\Omega=[0,1]^d$.
	For every \(R\ge 1\), there exist $N_R\ge 2^{cR^d}$	functions $f_1,\dots,f_{N_R}\in \mathbb B_1^s(\Omega)$
	such that
	$$
	\|f_i-f_j\|_{L^2(\Omega)} > c_0 R^{-s-d/2},
	\qquad 
	\forall i\ne j,
	$$
	where $c,c_0>0$ are constants depending only on $d$ and $s$.
\end{lemma}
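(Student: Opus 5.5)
Write $\mathbb{B}_1^s(\Omega)$ for the unit ball $\cB_1^s(\Omega)$. The idea is to build a large hypercube of functions from sign-combinations of cosines with integer frequencies of size about $R$. We then use Varshamov--Gilbert to extract a well-separated subfamily, and control the spectral Barron norm through the $\ell^1$ norm of the coefficients.

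\textbf{Index set and orthogonality.} Let
\[
\Lambda_R := \{k\in \bZ^d : k_1\ge 1,\ \|k\|_\infty \le R\}.
\]
Every $k\in\Lambda_R$ is nonzero and satisfies $|k|\le \sqrt d R$. No two elements of $\Lambda_R$ are negatives of each other, since both would need positive first coordinate. Moreover $|\Lambda_R|\ge c' R^d$ for $R\ge 1$. For $k,k'\in\Lambda_R$ with $k\ne k'$ we have $k\pm k'\neq 0$, so the functions $\{\cos(2\pi k\cdot x)\}_{k\in\Lambda_R}$ are orthogonal in $L^2(\Omega)$. Each has squared norm $1/2$.

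\textbf{The hypercube family.} For signs $\epsilon\in\{\pm1\}^{\Lambda_R}$ set
\[
g_\epsilon(x) = \frac{a}{|\Lambda_R|}\sum_{k\in\Lambda_R} \epsilon_k \cos(2\pi k\cdot x),
\qquad a := c_1 (1+\sqrt d R)^{-s}.
\]
Each $g_\epsilon$ is real-valued and defined on all of $\bR^d$. Writing $\cos(2\pi k\cdot x)=\tfrac12(e^{2\pi i k\cdot x}+e^{-2\pi i k\cdot x})$, its Fourier transform is a finite sum of Dirac masses at $\pm k$, with total mass at most $a$.

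Here the definition of $\cB^s(\bR^d)$ via $\int |\cF f|$ has to be read for measures; this is the one technical point. The paper's equivalence of $\cB^s(\Omega)$ with the variation space of the dictionary (\ref{dictionary}) handles it. Alternatively, multiply by a Schwartz cutoff $\psi$ with $\psi\equiv1$ on $\Omega$; this smooths the masses into bumps while increasing the norm only by a constant factor. Either way,
\[
\|g_\epsilon\|_{\cB^s(\Omega)} \lesssim \frac{a}{|\Lambda_R|}\sum_{k\in\Lambda_R}(1+|k|)^s \le c_1 C,
\]
which is at most $1$ after choosing $c_1$ small.

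\textbf{Separation.} By orthogonality,
\[
\|g_\epsilon-g_{\epsilon'}\|_{L^2(\Omega)}^2 = \frac{a^2}{|\Lambda_R|^2}\cdot 4 \cdot \frac12 \cdot d_H(\epsilon,\epsilon'),
\]
where $d_H$ is the Hamming distance. Varshamov--Gilbert gives a subset of $\{\pm1\}^{\Lambda_R}$ of size at least $2^{|\Lambda_R|/8}$ whose pairwise Hamming distances exceed $|\Lambda_R|/8$. On this subset,
\[
\|g_\epsilon-g_{\epsilon'}\|_{L^2(\Omega)}^2 \ge \frac{a^2}{4|\Lambda_R|} \gtrsim R^{-2s}R^{-d}.
\]
Taking square roots yields separation $c_0R^{-s-d/2}$, and the family has $N_R\ge 2^{cR^d}$ members, as required.

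The main obstacle is the norm bookkeeping: matching the real-valued restriction definition of $\cB^s(\Omega)$ for trigonometric polynomials with point-mass Fourier transforms. I would resolve this through the cutoff argument or the cited dictionary equivalence of \citet{siegel2023characterization}. The rest is routine.
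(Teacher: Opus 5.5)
Your proposal is correct and follows essentially the same route as the paper: the same half-lattice of integer frequencies $\{k\in\mathbb{Z}^d:\|k\|_\infty\le R,\ k_1\ge 1\}$, $L^2(\Omega)$-orthogonality of the cosines, $\ell^1$-averaged coefficients to stay in the unit ball, and the Varshamov--Gilbert bound to obtain $2^{cR^d}$ functions pairwise separated by $\gtrsim R^{-s-d/2}$. The differences are cosmetic. You use $\pm1$ signs with a single weight $(1+\sqrt{d}R)^{-s}$, where the paper uses $\{0,1\}$ indicators with per-frequency weights $(1+|\omega|)^{-s}$; and your Schwartz-cutoff argument makes rigorous the step where the paper simply evaluates the $\cB^s$ norm on Dirac masses.
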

\begin{proof}
It is enough to prove the lower bound for large \(R\). We choose the frequency set  
$$
\cS_R:=\left\{\omega=(\omega_1,\dots,\omega_d)\in\bZ^d:\|\omega\|_\infty\le R,\omega_1> 0\right\}.
$$
It is easy to see that its cardinality $M_R=|\cS_R|\asymp R^d$. We define the dictionary 
$$
\bD_R:=\left\{\phi_\omega(x):=(1+|\omega|)^{-s}\cos(2\pi \omega \cdot x):\omega\in \cS_R\right\}.
$$
For $\omega\neq 0$, the Fourier transform of $\phi_\omega$ is  
$$
{\mathcal{F}\phi_\omega}(\xi)=\frac{1}{2} (1+|\omega|)^{-s} \left(\delta_\omega(\xi)+\delta_{-\omega}(\xi)\right),
$$
where $\delta_\omega$ denotes the Dirac delta function at $\omega$. Therefore,
$$
\|\phi_\omega\|_{\cB^s(\Omega)}	\le	\int_{\mathbb R^d}(1+|\xi|)^s|{\mathcal{F}\phi_\omega}(\xi)|d\xi
=\frac12(1+|\omega|)^{-s}\left((1+|\omega|)^s+(1+|\omega|)^s\right) =1.
$$
Hence, the dictionary $\bD_R$ is a subset of $\cB^s_1(\Omega)$. Moreover, for any $\phi_\omega \in \bD_R$, we have
\[
\|\phi_\omega\|_{L^2(\Omega)}^2 = (1+|\omega|)^{-2s} \int_\Omega \cos(2\pi \omega \cdot x)^2 dx = \frac{1}{2} (1+|\omega|)^{-2s} \gtrsim R^{-2s}
\]

Next, we show the $L^2(\Omega)$-orthogonality of the dictionary. For $\omega,\omega'\in\mathcal \cS_R$ with $\omega\neq\omega'$, the product-to-sum formula gives  
$$
\cos(2\pi\omega\cdot x)\cos(2\pi\omega'\cdot x)=
\frac12\left(\cos(2\pi(\omega+\omega')\cdot x)+\cos(2\pi(\omega-\omega')\cdot x)\right).
$$
Since $\omega\neq\omega'$, we have $\omega-\omega'\neq0$. Moreover, by the choice of $\cS_R$, we cannot have $\omega+\omega'=0$, because $\omega_1,\omega_1' >0$. Hence, both $\omega+\omega'$ and $\omega-\omega'$ are nonzero vectors in $\mathbb Z^d$, so that the integrals of $\cos(2\pi(\omega+\omega')\cdot x)$ and $\cos(2\pi(\omega-\omega')\cdot x)$ over $[0,1]^d$ vanish. Consequently,
$$
\int_\Omega \phi_\omega(x)\phi_{\omega'}(x) dx = (1+|\omega|)^{-s}(1+|\omega'|)^{-s} \int_\Omega \cos(2\pi\omega\cdot x)\cos(2\pi\omega'\cdot x)dx=0.
$$
In other words, any two elements in $\bD_R$ are orthogonal in $L^2(\Omega)$. 

For any function $a:\cS_R \to \{0,1\}$, we can define
\[
f_a = \frac{1}{M_R} \sum_{\omega\in \cS_R} a(\omega) \phi_\omega.
\]
It is easy to see that $f_a\in \cB^s_1(\Omega)$ since $\|\phi_\omega\|_{\cB^s(\Omega)} \le 1$. Note that we can view the function $a$ as a binary sequence of length $M_R=|\cS_R|$. Applying Gilbert-Varshamov bound \citep[Lemma 2.9]{tsybakov2009introduction} to the set of all these functions $a$, we can get a subset $\cA$ with cardinality $N_R=|\cA| \ge 2^{M_R/8}\ge 2^{cR^d}$ such that the Hamming distance between any two functions $a,a' \in \cA$ can be lower bounded as
\[
\left|\{\omega\in \cS_R: a(\omega)\neq a'(\omega)\}\right| \ge \frac{M_R}{8}.
\]
The $L^2(\Omega)$ distance between the corresponding functions $f_a$ and $f_{a'}$ is
\begin{align*}
\|f_a-f_{a'}\|_{L^2(\Omega)}^2  &= \frac{1}{M_R^2} \left\|\sum_{\omega \in \cS_R} (a(\omega) - a'(\omega)) \phi_\omega \right\|_{L^2(\Omega)}^2 \\
&= \frac{1}{M_R^2} \sum_{\omega: a(\omega)\neq a'(\omega)} \|\phi_\omega\|_{L^2(\Omega)}^2 \\
&\gtrsim M_R^{-1} R^{-2s} \gtrsim R^{-d-2s},
\end{align*}
where we use the $L^2(\Omega)$-orthogonality of the dictionary in the second equality.
\end{proof}

Now we are ready to prove the lower bound for the approximation error of the neural network class $\NN(W,L,S,B)$, under the assumption that the weight magnitude $B$ are bounded. The proof proceeds as follows. We have constructed a large finite set of functions in $\cB_1^s(\Omega)$ that are pairwise well separated in $L^2(\Omega)$. This gives a lower bound on the packing number of $\cB_1^s(\Omega)$. If we can approximate $\cB_1^s(\Omega)$ by the neural network class $\NN(W,L,S,B)$, then we can compare this lower bound with the upper bound (\ref{nn up packing}). If the approximation error were too small, these two bounds for the packing numbers would lead to a contradiction.  Hence the error cannot be arbitrarily small, which yields the claimed lower bound.

\begin{proof}[Proof of Theorem \ref{thm:app lower bound 1}]
Let $\varepsilon>0$ be arbitrarily small and denote the approximation error by
\[
\delta = \sup_{f\in \cB^s_1(\Omega)} \inf_{g\in \NN(W,L,S,B)} \|f-g\|_{L^2(\Omega)} + \varepsilon.
\]
Let $f_1,\dots,f_{N_R}\in \mathbb B_1^s(\Omega)$ be the functions in Lemma \ref{lem:barron-separated-set} so that $N_R\ge 2^{cR^d}$ and
$$
\|f_i-f_j\|_{L^2(\Omega)} > c_0 R^{-s-d/2} =: \varepsilon_0,
\qquad 
\forall i\ne j.
$$
By the definition of approximation error, for each $i=1,\dots,N_R$, there exists a neural network $g_i \in \NN(W,L,S,B)$ such that $\|f_i - g_i\|_{L^2(\Omega)} \le \delta$.

We are going to show that $\delta\ge \varepsilon_0/3$ for some $R$ depending on $S$ and $L$. Suppose on the contrary that $\delta< \varepsilon_0/3$. Then, for $i\ne j$,
$$
\begin{aligned}
	\| g_i -  g_j\|_{L^2(\Omega)}
	&\ge \|f_i - f_j\|_{L^2(\Omega)}
	- \|f_i -  g_i\|_{L^2(\Omega)}
	- \|f_j -  g_j\|_{L^2(\Omega)} \\
	&> \varepsilon_0 - 2\delta > \varepsilon_0/3.
\end{aligned}
$$
Hence, $\{g_i\}_{i=1}^{N_R}$ in the network class $\NN(W,L,S,B)$ are mutually $\varepsilon_0/3$-separated in $L^2(\Omega)$. Therefore, the packing number of the network class satisfies
\begin{equation}
\mathcal{M}\left(\varepsilon_0/3, \NN(W,L,S,B), \|\cdot\|_{L^2(\Omega)}\right) \ge N_R \ge 2^{cR^d}. 
\label{app low rd 1}
\end{equation}
Consequently, combining (\ref{app low rd 1}) with (\ref{nn up packing}), we have
$$
R^d 
\lesssim\log \mathcal{M}\left(\varepsilon_0/3, \NN(W,L,S,B), \|\cdot\|_{L^{\infty}(\Omega)}\right) 
\lesssim S \log\left(\frac{12LB^L(S+1)^L}{\varepsilon_0}\right).
$$
Since $L\le S$ and $\varepsilon_0 = c_0 R^{-s-d/2}$, the above inequality implies
\begin{equation}\label{lower b1}
R \lesssim (S L\log (SBR))^{1/d}.
\end{equation}
It is possible to choose $R = C (S L\log (SB))^{1/d}$ for some constant $C>0$ such that the inequality (\ref{lower b1}) is always false. This contradiction implies $\delta\ge \varepsilon_n/3$. With this choice of $R$,
\[
\delta \ge \varepsilon_0/3 \gtrsim R^{-s-\frac{d}{2}} \gtrsim (SL \log (SB))^{-\frac{1}{2}-\frac{s}{d}},
\]
which completes the proof by the definition of $\delta$.
\end{proof}

Next, we derive another lower bound for the approximation error of neural network class $\NN(W,L)$, where there is no restriction on the weights. To do this, we need to measure the complexity of the function class in another way. We begin with the definition of pseudo-dimension, which is widely used in machine learning.

\begin{definition}(Pseudo-dimension). 
	Let $\mathcal{H}$ be a class of real-valued functions defined on a set $\mathcal{X}$. A set $S=\{x_1,\ldots,x_m\}\subseteq\mathcal{X}$ is shattered by $\mathcal{H}$ if there exist constants $t_1,\ldots,t_N\in\bR$ such that
	$$
	\left|\left\{\left(\operatorname{sgn}(h(x_1)-t_1),\ldots,\operatorname{sgn}(h(x_N)-t_N)\right): h\in\mathcal{H}\right\}\right| = 2^N.
	$$
	The pseudo-dimension $\Pdim(\mathcal{H})$ is the maximum cardinality of a set $S\subseteq\mathcal{X}$ that is shattered by $\mathcal{H}$.
\end{definition}

The pseudo-dimension of neural networks have been extensively studied in machine learning theory \citep{anthony2009neural}. The recent work of \citet{bartlett2019nearly} gave nearly tight pseudo-dimension bounds  for neural networks with piecewise polynomial activation functions. In particular, their results showed that for ReLU networks with width $ W $ and depth $L$, it holds
\begin{equation}\label{pdim}
\Pdim(\NN(W,L)) \lesssim \min\{W^2 L^2 \log(W L), W^3 L^2\}.
\end{equation}
It is also well-known that metric entropy can also be bounded by pseudo-dimension \citep[Theorem 2.6.7]{vaart2023weak}. To be precise, we recall that an envelope function of a real-valued function class $\mathcal{H}$ defined on $\mathcal{X}$ is any function $H(x)$ such that $|h(x)| \leq H(x)$ for every $x \in \mathcal{X}$ and $h \in \mathcal{H}$. For any $p \geq 1$ and probability measure $\mu$ on $\mathcal{X}$, if $0 < \|H\|_{L^p(\mu)} < \infty$, then 
\begin{equation}\label{pseudo}
\cN(\varepsilon,\cH, \|\cdot\|_{L^p(\mu)}) \le C \Pdim(\cH) (16e)^{\Pdim(\cH)} \left( \frac{\|H\|_{L^p(\mu)}}{\varepsilon}\right)^{p \Pdim(\cH)},
\end{equation}
for a universal constant $C >0$ and any $0<\varepsilon<\|H\|_{L^p(\mu)}$. We can now derive the lower bound in Theorem \ref{thm:app lower bound 2} using similar argument as the proof of Theorem \ref{thm:app lower bound 1}.

\begin{proof}[Proof of Theorem \ref{thm:app lower bound 2}]
Let $\varepsilon>0$ be arbitrarily small and denote the approximation error by
\[
\delta = \sup_{f\in \cB^s_1(\Omega)} \inf_{g\in \NN(W,L)} \|f-g\|_{L^2(\Omega)} + \varepsilon.
\]
Let $f_1,\dots,f_{N_R}\in \mathbb B_1^s(\Omega)$ be the functions in Lemma \ref{lem:barron-separated-set} so that $N_R\ge 2^{cR^d}$ and
$$
\|f_i-f_j\|_{L^2(\Omega)} > c_0 R^{-s-d/2} =: \varepsilon_0,
\qquad 
\forall i\ne j.
$$
By the definition of approximation error, for each $i=1,\dots,N_R$, there exists a neural network $g_i \in \NN(W,L)$ such that $\|f_i - g_i\|_{L^2(\Omega)} \le \delta$. Recall that the truncation operator $\cT_M$ is defined by (\ref{trancation}) and $\|f\|_{L^\infty(\Omega)} \le \|f\|_{\cB^s(\Omega)} \le 1$ for any $f\in \cB^s_1(\Omega)$. It is easy to see that
$$
\|f_i - \cT_1 g_i\|_{L^2(\Omega)} \le \|f_i - g_i\|_{L^2(\Omega)} \le \delta.
$$

We are going to show that $\delta\ge \varepsilon_0/3$ for some $R$ depending on $W$ and $L$. Suppose on the contrary that $\delta< \varepsilon_0/3$. Then, for $i\ne j$,
$$
\begin{aligned}
	\| \cT_1 g_i - \cT_1 g_j\|_{L^2(\Omega)}
	&\ge \|f_i - f_j\|_{L^2(\Omega)}
	- \|f_i - \cT_1 g_i\|_{L^2(\Omega)}
	- \|f_j - \cT_1 g_j\|_{L^2(\Omega)} \\
	&> \varepsilon_0 - 2\delta > \varepsilon_0/3.
\end{aligned}
$$
Hence, $\{\cT_1 g_i\}_{i=1}^{N_R}$ is an $\varepsilon_0/3$-packing of the class $\cT_1 \NN(W,L)$ in $L^2(\Omega)$. Thus, we get a lower bound on the packing number
\[
\mathcal{M}\left(\varepsilon_0/3, \cT_1 \NN(W,L), \|\cdot\|_{L^2(\Omega)}\right) \ge N_R \ge 2^{cR^d}. 
\]
To obtain an upper bound, we denote $P:=\Pdim(\cT_1\NN(W,L))$ as the pseudo-dimension of the truncated network class and notice that truncation does not increase the pseudo-dimension \citep[Theorem 11.3]{anthony2009neural} so that the bound (\ref{pdim}) also holds for the class $\cT_1\NN(W,L)$. By choosing the envelop function $H(x)\equiv 1$, the inequality (\ref{pseudo}) with $\cH = \cT_1\NN(W,L)$ implies
\begin{align*}
2^{cR^d} \le \mathcal{M}(\varepsilon_0/3, \cT_1\NN(W,L), \| \cdot \|_{L^2(\Omega)})
&\le \mathcal{N}(\varepsilon_0/6, \cT_1\NN(W,L), \| \cdot \|_{L^2(\Omega)}) \\
&\le C P (16e)^P \left(\frac{6}{\varepsilon_0}\right)^{2P}.
\end{align*}
As a consequence, we have
\[
P \gtrsim R^d (\log(1/\varepsilon_0))^{-1} \gtrsim R^d(\log R)^{-1}.
\]
It is possible to choose 
\[
R\asymp (P\log P)^{1/d} \lesssim \left(\min\{W^2 L^2 \log(W L), W^3 L^2\} \log(WL)\right)^{1/d},
\]
such that the above inequality is always false and hence we get a contradiction for the assumption $\delta<\varepsilon_n/3$. With this choice of $R$,
\begin{align*}
\delta &\ge \varepsilon_n/3 \gtrsim R^{-s-\frac{d}{2}} \\
&\gtrsim \min\left\{W^2 L^2 \log(W L), W^3 L^2\right\}^{^{-\frac{1}{2}-\frac{s}{d}}} (\log (WL))^{^{-\frac{1}{2}-\frac{s}{d}}},
\end{align*}
which finishes the proof by the definition of $\delta$.
\end{proof}

\section{Generalization error} \label{gen proof}
In this section, we analyze the generalization error of the least squares estimator with deep ReLU networks and establish the minimax optimal rate for learning spectral Barron functions. Subsections \ref{rate proof} and \ref{optimality} give the proofs of Theorems \ref{thm:rate} and \ref{thm:minimax lower bound} respectively.

\subsection{Convergence rates}\label{rate proof}

We prove Theorem \ref{thm:rate} by using similar analysis as \citep{nakada2020adaptive,schmidthieber2020nonparametric,yang2024nonparametric}. Let us begin with a decomposition of the excess risk $\|\cT_1 \widehat{f}_n - f_0 \|^2_{L^2(\mu)}$. Let $f^*\in \cH_n$ be a good approximation of $f_0$, where $f^*$ will be chosen using Theorem \ref{thm:app upper bound}. We have
\begin{equation}\label{triangle ineq 1}
	\begin{aligned}
		\|\cT_1 \widehat{f}_n - f_0 \|^2_{L^2(\mu)}  
		&\leq 2 \|\cT_1 \widehat{f}_n - \cT_1 f^* \|^2_{L^2(\mu)}  +2\| \cT_1 f^* - f_0 \|^2_{L^2(\mu)}  \\
		&\leq 2\|\cT_2 (\widehat{f}_n - f^*) \|^2_{L^2(\mu)}  +2\| f^* - f_0 \|^2_{L^2(\mu)}  
	\end{aligned}
\end{equation}
where we use $|\cT_1\widehat{f}_n(x)-\cT_1f^*(x)| \le |\cT_{2} (\widehat{f}_n-f^*)(x)|$ and $|f_0(x)| \le 1$ for any $x\in \Omega$ in the second inequality. For convenience, we denote the following empirical $L^2$ norm on the training data
\[
\|f\|_n^2 := \frac{1}{n} \sum_{i=1}^n |f(X_i)|^2.
\]
Note that the second term on the bound (\ref{triangle ineq 1}) can be estimated by using Theorem \ref{thm:app upper bound}, while the first term can be bounded by its empirical counterpart $\|\cT_{2}(\widehat{f}_n-f^*)\|_{n}^2$ using statistical learning theory. The error bound depends on certain complexity of the model. Following \citet{yang2024nonparametric}, we use the following complexities with localization technique \citep{bartlett2005local,koltchinskii2006local}. 

\begin{definition}[Local complexities]
	Let \(\xi_{1:n}=(\xi_1,\dots,\xi_n)\) be a sequence of independent zero-mean random variables. For a given radius \(\delta > 0\) and a sequence of sample points \(X_{1:n}=(X_1,\dots,X_n)\), we define the local complexity of a function class \(\mathcal{H}\) at scale \(\delta\) with respect to \(\xi_{1:n}\) by
	\begin{equation*}
		\cG_n(\mathcal{H}; \delta, \xi_{1:n}) := \bE_{\xi_{1:n}} \left[ \sup_{f \in \mathcal{H}, \|f\|_{n} \leq \delta} \left| \frac{1}{n} \sum_{i=1}^n \xi_i f(X_i) \right| \right].
	\end{equation*}
	If each \(\xi_i\) is the Rademacher random variable (taking values \(\pm 1\) with equal probability \(1/2\)), \(\cG_n(\mathcal{H}; \delta, \xi_{1:n})\) is the local Rademacher complexity and is denoted by \(\mathcal{R}_n(\mathcal{H}; \delta)\).
\end{definition}

To use these local complexities, we often require that the function class $\mathcal{H}$ is star-shaped (around the origin), meaning that, for any $f \in \mathcal{H}$ and $ a \in [0,1]$, the function $af \in \mathcal{H}$. If the star-shaped condition fails to hold, one can replace it by its star hull
$$
\star(\mathcal{H}) := \{af:f\in \mathcal{H},a\in  [0,1]\}.
$$
Note that the neural network class $\NN(W,L,S,B)$ is already star-shaped, but its truncation $\cT_1\NN(W,L,S,B)$ may not. We can bound $\|\cT_2 (\widehat{f}_n - f^*) \|^2_{L^2(\mu)}$ by $\|\cT_{2}(\widehat{f}_n-f^*)\|_{n}^2$ using the following lemma from \citet[Theorem 14.1 and Proposition 14.25]{wainwright2019high}.

\begin{lemma}\label{uniform law}
	Given a star-shaped and $M$-uniformly bounded function class $\cH$, let $\varepsilon_n$ be any positive solution to the inequality
	\[
	\cR_n(\cH;\varepsilon_n) \le \frac{\varepsilon_n^2}{M}.
	\]
	Then, there are constants $c_1,c_2>0$ such that the bound  
	\[
	\|f\|_{L^2(\mu)}^2 \le 2 \|f\|_{n}^2 + \varepsilon_n^2, \quad \forall f\in \cH,
	\]
	holds with probability at least $1-c_1 \exp(-c_2 n\varepsilon_n^2/M^2)$.
\end{lemma}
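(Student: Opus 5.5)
This is a standard uniform law for localized empirical processes. I would prove it by combining a peeling (or star-shaped rescaling) argument with Talagrand's concentration inequality for suprema of bounded empirical processes.

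\textbf{Step 1: reduce to a fixed radius.} Define the event
\[
\cE := \left\{\exists f\in\cH:\ \|f\|_{L^2(\mu)} \ge \varepsilon_n,\ \|f\|_{L^2(\mu)}^2 > 2\|f\|_n^2\right\}.
\]
Outside $\cE$, every $f$ with $\|f\|_{L^2(\mu)}\ge\varepsilon_n$ satisfies $\|f\|_{L^2(\mu)}^2\le 2\|f\|_n^2$. Every $f$ with $\|f\|_{L^2(\mu)}<\varepsilon_n$ trivially satisfies $\|f\|_{L^2(\mu)}^2\le\varepsilon_n^2$. So the claimed bound holds on $\cE^c$.

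Now suppose $f\in\cH$ has $\|f\|_{L^2(\mu)}=r\ge\varepsilon_n$. By star-shapedness, $g=(\varepsilon_n/r)f$ lies in $\cH$ and has $\|g\|_{L^2(\mu)}=\varepsilon_n$. The violating condition is invariant under this scaling. Hence it suffices to control
\[
Z_n := \sup_{g\in\cH,\ \|g\|_{L^2(\mu)}\le \varepsilon_n}\left|\|g\|_n^2-\|g\|_{L^2(\mu)}^2\right|
\]
and to show $Z_n\le \varepsilon_n^2/2$ with the stated probability. Indeed, if $Z_n\le\varepsilon_n^2/2$, then $\|g\|_n^2\ge \varepsilon_n^2/2$, so $\|g\|_{L^2(\mu)}^2\le 2\|g\|_n^2$.

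\textbf{Step 2: bound $\bE Z_n$.} The map $t\mapsto t^2$ is $2M$-Lipschitz on $[-M,M]$. Symmetrization followed by Talagrand's contraction gives
\[
\bE Z_n \le 2\,\bE\,\bar\cR_n(\varepsilon_n)\cdot 2M,
\]
where $\bar\cR_n(\delta)$ denotes the \emph{population}-localized Rademacher complexity, i.e., the supremum taken over $\|g\|_{L^2(\mu)}\le\delta$.

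\textbf{Step 3: concentrate $Z_n$.} Each summand is bounded by $M^2$, and its variance is at most $M^2\varepsilon_n^2$. Talagrand's (Bousquet's) inequality then yields
\[
Z_n\le 2\bE Z_n + c\,\varepsilon_n^2/8
\]
with probability at least $1-c_1\exp(-c_2 n\varepsilon_n^2/M^2)$.

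\textbf{Step 4 (main obstacle): compare the two localizations.} The hypothesis on $\varepsilon_n$ concerns the \emph{empirical}-localized complexity $\cR_n(\cH;\varepsilon_n)$, whereas Step 2 produced the population-localized one. I would handle this as in Wainwright, Proposition 14.25. On the high-probability event where $\|g\|_n\le 2\|g\|_{L^2(\mu)}+\varepsilon_n$ (obtained by a symmetric application of Steps 2--3), the population localization at radius $\varepsilon_n$ is contained in the empirical localization at radius $c\varepsilon_n$.

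Next, star-shapedness implies that $\delta\mapsto \cR_n(\cH;\delta)/\delta$ is nonincreasing. This lets one pass the critical inequality from radius $\varepsilon_n$ to any multiple of it. Concentration of the Rademacher average around its conditional expectation, again with an $\exp(-cn\varepsilon_n^2/M^2)$ tail, then shows that the empirical critical radius dominates the population one up to constants.

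Altogether, $\bE Z_n\lesssim M\cdot\varepsilon_n^2/M=\varepsilon_n^2$. After adjusting the constants (replacing $\varepsilon_n$ by a constant multiple, which is permitted by the monotonicity above), this gives $Z_n\le\varepsilon_n^2/2$ and completes the proof.

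Since the lemma is quoted verbatim from \citet{wainwright2019high}, in the paper itself I would simply cite Theorem 14.1 for Steps 1--3 and Proposition 14.25 for Step 4.
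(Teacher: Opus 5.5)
Your plan is the same as the paper's. The paper does not prove this lemma; it simply invokes Theorem 14.1 and Proposition 14.25 of Wainwright, which is what your last paragraph proposes. Steps 1--3 are a reasonable sketch, up to constants, of Theorem 14.1, which is stated for the \emph{population}-localized complexity $\bar{\cR}_n$. As a citation-level proof, the proposal is fine.

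If Step 4 is meant as an actual argument, though, it is circular as written. The event $\|g\|_n\le 2\|g\|_{L^2(\mu)}+\varepsilon_n$ cannot come from ``a symmetric application of Steps 2--3'' at scale $\varepsilon_n$. Those steps control $Z_n$ at radius $\varepsilon_n$ only through $\bar{\cR}_n(\varepsilon_n)$, and a bound $\bar{\cR}_n(\varepsilon_n)\lesssim\varepsilon_n^2/M$ is exactly what Step 4 is supposed to deliver. There is a second, smaller issue: converting a containment that holds on a high-probability event $E$ into a bound on the expectation $\bar{\cR}_n(\varepsilon_n)$ leaves a term $M\,\bP(E^c)$, which needs a separate argument.

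The standard way around this, and the route behind Proposition 14.25, is to anchor the norm comparison at the population critical radius $\delta_n$. Since $\delta_n$ is defined independently of $\varepsilon_n$, Theorem 14.1 can be applied at $t=\delta_n$ without circularity. One then shows $\delta_n\le c\,\widehat{\delta}_n\le c\,\varepsilon_n$ with high probability.

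In the paper's setting there is also a short fix in expectation, because $\varepsilon_n$ is deterministic and the critical inequality holds for every sample (Theorem \ref{local Gc bound} is uniform in $X_{1:n}$). Take $r=\varepsilon_n$ and $\hat r:=\sup\{\|g\|_n: g\in\cH,\ \|g\|_{L^2(\mu)}\le r\}$.
\begin{itemize}
\item Star-shapedness gives $\bar{\cR}_n(r)\le \frac{r^2}{M}\,\bE\max\{1,\hat r/r\}$.
\item Your Step 2 gives $\bE\hat r^2\le r^2+\bE Z_n(r)\le r^2+cM\bar{\cR}_n(r)$.
\item Setting $u=M\bar{\cR}_n(r)/r^2$ and using Jensen yields $u\le\sqrt{2+cu}$, so $u\le C$.
\end{itemize}
Then $\bar{\cR}_n(C\varepsilon_n)\le C\bar{\cR}_n(\varepsilon_n)\le (C\varepsilon_n)^2/M$, so $C\varepsilon_n$ is a population critical radius and Theorem 14.1 applies.

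Either way, you obtain $\|f\|_{L^2(\mu)}^2\le 2\|f\|_n^2+C\varepsilon_n^2$ for some absolute $C>1$. Rescaling $\varepsilon_n$ moves this constant into the error term rather than removing it. This is harmless for the paper, which fixes $\varepsilon_n$ only up to constants. Still, neither your argument nor the cited results give the literal constant $1$ in the statement.
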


The above lemma helps us quantify the effect of the samples $X_{1:n}$. It remains to bound the empirical error $\|\cT_{2}(\widehat{f}_n-f^*)\|_{n}^2$, which also depends on the noises $\eta_{1:n}=(\eta_1,\dots,\eta_n)$ for fixed $X_{1:n}$. We observe that 
\begin{equation}\label{triangle ineq 2}
\|\cT_{2}(\widehat{f}_n-f^*)\|_{n}^2 \le \|\widehat{f}_n-f^*\|_{n}^2 \le 2\|\widehat{f}_n-f_0\|_{n}^2 + 2\|f^*-f_0\|_{n}^2. 
\end{equation}
Again, the second term in the above bound can be estimated using Theorem \ref{thm:app upper bound}. For the first term, we use the definition of $\widehat{f}_n$, which implies 
\[
\frac{1}{n} \sum_{i=1}^n (\widehat{f}_n(X_i)- Y_i)^2 \le \frac{1}{n} \sum_{i=1}^n (f(X_i)- Y_i)^2,\quad \forall f\in \cH_n.
\]
Using $Y_i = f_0(X_i) + \eta_i$, we have the base inequality
\begin{equation}\label{base inequality}
\|\widehat{f}_n-f_0\|_{n}^2 \le \|f-f_0\|_{n}^2 + \frac{2}{n} \sum_{i=1}^n \eta_i \left(\widehat{f}_n(X_i)- f(X_i) \right),\quad \forall f\in \cH_n.
\end{equation}
The next lemma from \citet[Lemma 6]{yang2024nonparametric} shows that $\|\widehat{f}_n-f_0\|_{n}^2$ can be bounded by using the approximation error $\|f^*-f_0\|_n^2$ and the local complexity of the model. Note that this lemma requires the boundedness of noises. We will show how to apply it to unbounded noises that satisfy (\ref{noise assumption}) in the proof of Theorem \ref{thm:rate}.

\begin{lemma}\label{oracle inequality}
Assume that the noises $\|\eta_{1:n}\|_\infty\le M$ are bounded. For any fixed sample points $X_{1:n}$, let $\widehat{f}_n$ be any estimator that satisfies the inequality (\ref{base inequality}) and $\delta_n$ be any positive solution to the inequality
\[
\cG_n(\star(\partial \cH_n);\delta_n,\eta_{1:n}) \le \delta_n^2,
\]
where  $\partial \cH_n = \{f-g:f,g\in \cH_n\}$. Then, there are constants $c_1,c_2>0$ such that the bound
\[
\|\widehat{f}_n-f_0\|_n^2 \le 3 \|f-f_0\|_n^2 + 32 \delta_n^2, \quad \forall f\in \cH_n, 
\]
holds with probability at least $1-c_1 \exp(-c_2 n\delta_n^2/M^2)$.
\end{lemma}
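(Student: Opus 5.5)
The plan is the standard localization argument behind the fixed-design oracle inequality; I would follow the proof of \citet[Theorem 13.5]{wainwright2019high}, adapted as in \citet{yang2024nonparametric}. Fix $X_{1:n}$ and an arbitrary $f\in\cH_n$, and write $\Delta:=\widehat{f}_n-f\in\partial\cH_n$. The base inequality (\ref{base inequality}) reads
\[
\|\widehat{f}_n-f_0\|_n^2 \le \|f-f_0\|_n^2 + \frac{2}{n}\sum_{i=1}^n \eta_i\Delta(X_i).
\]
So it suffices to control the noise term $Z(\Delta):=\frac1n\sum_i\eta_i\Delta(X_i)$ uniformly over $\partial\cH_n$, in terms of $\|\Delta\|_n$. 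I would also use $\|\Delta\|_n\le \|\widehat f_n-f_0\|_n+\|f-f_0\|_n$.

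The key step is a localized concentration bound. Since $\star(\partial\cH_n)$ is star-shaped, the map $\delta\mapsto \cG_n(\star(\partial\cH_n);\delta,\eta_{1:n})/\delta$ is nonincreasing. Hence for every $t\ge\delta_n$ we get $\cG_n(\cdot;t)\le t\delta_n$. Define the event
\[
\cA(t)=\Bigl\{\exists g\in\star(\partial\cH_n),\ \|g\|_n\le t,\ |Z(g)|\ge 2t\delta_n\Bigr\}.
\]
The variable $\sup_{\|g\|_n\le t}|Z(g)|$ is a Lipschitz (convex) function of the bounded independent noise vector $\eta_{1:n}$, with Lipschitz constant $t/\sqrt n$ in Euclidean norm. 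Talagrand's concentration for convex Lipschitz functions of bounded variables (or bounded differences) therefore gives
\[
\bP(\cA(t))\le c_1\exp\bigl(-c_2 n t\delta_n^2/ (M^2\,\cdot)\bigr),
\]
and in particular $\bP(\cA(\delta_n))\le c_1 e^{-c_2 n\delta_n^2/M^2}$. Then comes a rescaling (peeling-free) trick. If some $g$ with $\|g\|_n>\delta_n$ has $|Z(g)|\ge 2\|g\|_n\delta_n$, then $\tilde g=\delta_n g/\|g\|_n\in\star(\partial\cH_n)$ satisfies $\|\tilde g\|_n=\delta_n$ and $|Z(\tilde g)|\ge 2\delta_n^2$. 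So on $\cA(\delta_n)^c$ we have, for all $g\in\partial\cH_n$,
\[
|Z(g)|\le 2\delta_n\max\{\|g\|_n,\delta_n\}.
\]

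Finally I would combine. On the good event, with $a=\|\widehat f_n-f_0\|_n$ and $b=\|f-f_0\|_n$, either $\|\Delta\|_n\le\delta_n$, giving $a^2\le b^2+4\delta_n^2$; or
\[
a^2\le b^2+4\delta_n(a+b).
\]
Using $4\delta_n a\le a^2/4+16\delta_n^2$ and $4\delta_n b\le b^2+4\delta_n^2$ yields
\[
\tfrac34 a^2\le 2b^2+20\delta_n^2,
\]
that is, $a^2\le 3b^2+32\delta_n^2$ (with the constants adjusted slightly). Since the good event does not depend on $f$, the bound holds for all $f\in\cH_n$ simultaneously.

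The main obstacle is the concentration step: making sure the supremum concentrates at scale $\delta_n^2$ with exponent $n\delta_n^2/M^2$. If the convex-Lipschitz route gives the wrong exponent, I would instead run a bounded-differences (McDiarmid) argument, noting that each coordinate changes the supremum by at most $2M t/\sqrt{n}\cdot n^{-1/2}$.
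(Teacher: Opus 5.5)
Your argument is correct and is essentially the standard proof. The paper gives no proof of its own and simply imports the lemma from \citet[Lemma 6]{yang2024nonparametric}. That result rests on the localization argument of \citet[Chapter 13]{wainwright2019high}, with Gaussian concentration replaced by concentration of convex Lipschitz functions of bounded variables, exactly as you do: at $t=\delta_n$ the Lipschitz constant $\delta_n/\sqrt{n}$ gives the tail $\exp(-c\,n\delta_n^2/M^2)$, and your case analysis yields $a^2\le \frac{8}{3}b^2+\frac{80}{3}\delta_n^2$, so no adjustment of constants is needed. Drop the McDiarmid fallback, though: $\|g\|_n\le t$ only gives $|g(X_i)|\le \sqrt{n}\,t$, so the bounded differences are of order $Mt/\sqrt{n}$ rather than $Mt/n$, and the resulting exponent $\delta_n^2/M^2$ loses the factor $n$.
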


In order to apply the above analysis to the neural network class $\NN(W,L,S,B)$, we need to estimate its local complexity, which is given in the following Theorem.

\begin{theorem}\label{local Gc bound}
	Let $\xi_{1:n}=(\xi_1,\dots,\xi_n)$ be a sequence of independent sub-Gaussian random variables with parameter $\varsigma >0$ in the sense that
	$$
	\bE[\exp(\lambda \xi_i)] \leq \exp( \varsigma^2 \lambda^2 / 2), \quad \forall \lambda \in \mathbb{R}.
	$$
	Then, for any $0 < \delta \leq 1$,
	$$
	\cG_n(\NN(W,L,S,B); \delta, \xi_{1:n}) \lesssim \varsigma \delta  \frac{\sqrt{SL}}{\sqrt{n}} \log(nSB/\delta),
	$$
	where the implied constant is independent of $\xi_{1:n}$ and the sample points $X_{1:n}$ in $\Omega$. The bound also holds for the function class $\star(\cT_M \NN(W,L,S,B))$ for any $M\lesssim (SB)^L$.
\end{theorem}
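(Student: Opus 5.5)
The plan is to bound the local complexity by a Dudley entropy integral for the sub-Gaussian process $f\mapsto \frac1n\sum_i \xi_i f(X_i)$, indexed by the localized set $\cH_\delta:=\{f\in\cH: \|f\|_n\le\delta\}$, and then feed in the covering bound (\ref{nn up covering}). Conditional on $X_{1:n}$, the process is sub-Gaussian with respect to the metric $\varsigma\|f-g\|_n/\sqrt n$. Since the localized class has $\|\cdot\|_n$-diameter at most $2\delta$, Dudley's inequality gives
\[
\cG_n(\cH;\delta,\xi_{1:n}) \lesssim \frac{\varsigma}{\sqrt n}\int_0^{2\delta} \sqrt{\log \cN(u,\cH_\delta,\|\cdot\|_n)}\,du .
\]
One technical point is that $\cG_n$ is defined with an absolute value inside the supremum. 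Because $0\in\cH_\delta$, this costs only a constant factor: bound $\bE\sup|Z_f-Z_0|$ by the chaining bound.

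Next, we estimate the entropy. Covering numbers of the subset $\cH_\delta$ are at most those of the whole class at radius $u/2$. Moreover, $\|\cdot\|_n\le\|\cdot\|_{L^\infty(\Omega)}$ because $X_i\in\Omega$. By (\ref{nn up covering}),
\[
\log\cN(u,\cH_\delta,\|\cdot\|_n)\le S\log\bigl(4LB^L(S+1)^L/u\bigr)\lesssim SL\log(SB)+S\log(1/u).
\]
The bound is uniform in $X_{1:n}$ and $\xi$, which gives the claimed independence of the implied constant. Integrating,
\[
\int_0^{2\delta}\sqrt{SL\log(SB)+S\log(1/u)}\,du\lesssim \delta\sqrt{SL\log(SB)}+\sqrt S\,\delta\sqrt{\log(1/\delta)} .
\]
Here I use $\int_0^{\delta}\sqrt{\log(1/u)}\,du\lesssim\delta\sqrt{\log(e/\delta)}$. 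Both terms are at most $\delta\sqrt{SL}\log(nSB/\delta)$, which yields the bound for $\NN(W,L,S,B)$. The $n$ inside the logarithm is not needed here, but it is harmless. It would arise if one instead truncated the integral at $\delta/\sqrt n$ and bounded the remainder trivially, which is an alternative route if Dudley's integral is stated in that form.

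For $\star(\cT_M\NN(W,L,S,B))$, two things need checking. First, $\cT_M$ is $1$-Lipschitz pointwise, so an $L^\infty$ $u$-cover of $\NN$ maps to a $u$-cover of $\cT_M\NN$. Second, the star hull needs an additional discretization of $a\in[0,1]$. If $|a-a'|\le u/(2M)$ and $\|h-h'\|_\infty\le u/2$ with $|h'|\le M$, then $\|ah-a'h'\|_\infty\le u$. This gives
\[
\log\cN(u,\star(\cT_M\NN),\|\cdot\|_\infty)\le \log\cN(u/2,\NN,\|\cdot\|_\infty)+\log(2M/u+1).
\]
The extra term is at most $L\log(SB)+\log(1/u)$ when $M\lesssim(SB)^L$. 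This is absorbed by the previous bound, and the same Dudley integral then concludes.

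The main obstacle is purely bookkeeping: making sure no factor of $L$ enters beyond the $\sqrt{SL}$. This requires extracting the $L$ from $\log(B^L(S+1)^L)$ and not multiplying it by $\log(1/u)$. The square root of a sum of two terms must be split rather than bounded crudely as $\sqrt{SL\log(\cdots/u)}$. Even the crude version still gives $\sqrt{SL}\log(nSB/\delta)$ after integration, so either way the statement holds.
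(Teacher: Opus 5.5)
Your proposal is correct and follows essentially the same route as the paper. Both arguments combine three steps. First, Dudley's entropy integral for the localized class. Second, the covering bound (\ref{nn up covering}) transferred through $\|\cdot\|_n\le\|\cdot\|_{L^\infty(\Omega)}$. Third, the same $\varepsilon/2$-cover times $\varepsilon/(2M)$-grid argument for $\star(\cT_M\NN(W,L,S,B))$.

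The only difference is cosmetic. The paper uses Dudley's bound with a lower cutoff $\varepsilon=\sqrt{SL}\,\delta/\sqrt n$ and the crude estimate $\sqrt{\log}\le\log$; this is where the $n$ inside the logarithm comes from. You instead integrate from $0$ and split the square root, which gives a slightly sharper bound that implies the stated one.
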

\begin{proof}
We follow the chaining argument as the proof of \citet[theorem 5]{yang2024nonparametric}. Without loss of generality, we can assume that $\varsigma=1$ by rescaling. It is well known that the Gaussian complexity can be bounded by Dudley's entropy integral \citep[Section 5.3.3]{wainwright2019high}. For convenience, we denote $\cH= \NN(W,L,S,B)$ and $\cH_\delta= \{f\in \cH: \|f\|_n\le \delta\}$. Let $\cN(\varepsilon,\cH_\delta,\|\cdot\|_n)$ be the $\varepsilon$-covering number of $\cH_\delta$ in the empirical norm $\|\cdot\|_n$. \citet[Eq. (30)]{yang2024nonparametric} showed that
\begin{align*}
\cG_n(\cH; \delta, \xi_{1:n}) &= 
\bE_{\xi_{1:n}} \left[ \sup_{f \in \cH_\delta} \left| \frac{1}{n} \sum_{i=1}^n \xi_i f(X_i) \right| \right] \\
&\leq \inf_{\varepsilon \geq 0} \left\{ 4\varepsilon + \frac{16}{\sqrt{n}} \int_\varepsilon^{D/2} \sqrt{\log \cN(t, \cH_\delta, \|\cdot\|_{n})} dt \right\},
\end{align*}
where $D := \sup_{f, f' \in \cH_\delta} \|f - f'\|_{n}\le 2\delta$ denotes the diameter. 

Since $\|\cdot\|_n\le \|\cdot\|_{L^\infty(\Omega)}$, by the estimate (\ref{nn up covering}), we have
\[
\log \cN(t, \cH_\delta, \|\cdot\|_{n}) \le S \log\left(\frac{2LB^L(S+1)^L}{\varepsilon}\right) \lesssim SL \log \left(\frac{SB}{\varepsilon}\right),
\]
where we use the fact that $L\le S$ in the last inequality. Applying this bound to the above entropy integral bound, we get
\begin{align*}
\cG_n(\cH; \delta, \xi_{1:n}) &\lesssim \inf_{\varepsilon \geq 0} \left\{ \varepsilon + \frac{\sqrt{SL}}{\sqrt{n}} \int_\varepsilon^{\delta} \log(SB/t) dt \right\} \\
&\lesssim \inf_{\varepsilon \geq 0} \left\{ \varepsilon + \frac{\sqrt{SL}}{\sqrt{n}} \delta \log(BS/\varepsilon) \right\}.
\end{align*}
If we choose $\varepsilon = \sqrt{SL} \delta/\sqrt{n}$, then
\[
\cG_n(\cH; \delta, \xi_{1:n}) \lesssim \frac{\sqrt{SL}}{\sqrt{n}} \delta \log(nSB/\delta),
\]
which is the desired inequality.

Finally, for the star hull $\star(\cT_M \cH)$, we can similarly derive the bound by estimating the covering number and using the entropy integral bound. Specifically, we observe that the covering number of $\cT_M \cH$ is not larger than that of $\cH$. In order to estimate the covering number of $\star(\cT_M \cH)$, for any $\varepsilon>0$, we let $\{f_1,\dots,f_m\} \subseteq \cT_M\cH$ be an $\varepsilon/2$-cover of $\cT_M\cH$ and $\{a_1,\dots,a_k\} \subseteq [0,1]$ be an $\varepsilon/(2M)$-cover of $[0,1]$. Then, it is easy to show by the triangle inequality that $\{a_if_j:1\le i\le k, 1\le j\le m\}$ is an $\varepsilon$-cover of $\star(\cT_M \cH)$, which implies
\begin{align*}
&\log \cN(\varepsilon, \star(\cT_M \cH), \|\cdot\|_{L^\infty(\Omega)}) \\
\le &\ \log \cN(\varepsilon, \cH, \|\cdot\|_{L^\infty(\Omega)}) + \log (1+M/\varepsilon) \\
\lesssim &\ SL \log \left(\frac{SB}{\varepsilon}\right),
\end{align*}
where we use the estimate (\ref{nn up covering}) and $M\lesssim (SB)^L$ in the last inequality. Hence, we can obtain the same bound for the local complexity.
\end{proof}

We can now prove Theorem \ref{thm:rate}. In the proof, we will use $c_1,c_2, \dots,$ to denote positive constants for convenience.

\begin{proof}[Proof of Theorem \ref{thm:rate}] For any regression function $f_0\in \cB^s_1(\Omega)$, we can choose $f^*\in \cH_n=\NN(W_n,L_n,S_n,B_n)$ as Theorem \ref{thm:app upper bound} such that  
\[
\|f^* - f_0\|_{L^\infty(\Omega)} \le \varepsilon_{app},
\]
where $\varepsilon_{app}$ will be chosen later. By the error decomposition (\ref{triangle ineq 1}), 
\begin{equation}\label{error dec}
\|\cT_1 \widehat{f}_n - f_0 \|^2_{L^2(\mu)} \leq 2\|\cT_2 (\widehat{f}_n - f^*) \|^2_{L^2(\mu)}  +2\varepsilon_{app}^2.
\end{equation}
We divide the proof into three steps. 

\textbf{Step 1.} Bounding $\|\cT_2 (\widehat{f}_n - f^*) \|^2_{L^2(\mu)}$ by $\|\cT_{2}(\widehat{f}_n-f^*)\|_{n}^2$. Since the function $\widehat{f}_n - f^* \in \partial \cH_n \subseteq \NN(2W_n, L_n, 2S_n, B_n)$, its truncation $\cT_2 (\widehat{f}_n - f^*)$ is in the star-shaped and $2$-uniformly bounded class $\star(\cT_2\NN(2W_n, L_n, 2S_n, B_n))$. By Theorem \ref{local Gc bound}, its local Rademacher complexity satisfies
\[
\cR_n(\star(\cT_2\NN(2W_n,L_n, 2S_n, B_n)); \varepsilon_n) \lesssim \varepsilon_n \frac{\sqrt{S_nL_n}}{\sqrt{n}} \log(nS_nB_n/\varepsilon_n).
\]
By choosing suitable
\[
\varepsilon_n \asymp \frac{\sqrt{S_nL_n}}{\sqrt{n}} \log(nS_nB_n),
\]
we can make sure that the local Rademacher complexity is upper bounded by $\varepsilon_n^2/2$, so that we can apply Lemma \ref{uniform law} to the class $\star(\cT_2\NN(2W_n, L_n, 2S_n, B_n))$ and conclude that the inequality
\begin{equation}\label{step 1 bound}
\|\cT_2(\widehat{f}_n - f^*)\|_{L^2(\mu)}^2 \le 2\|\cT_2(\widehat{f}_n - f^*)\|_{n}^2 + \varepsilon_n^2
\end{equation}
holds with probability at least $1-c_1 \exp(-c_2 n\varepsilon_n^2)$.

\textbf{Step 2.} Bounding $\|\cT_{2}(\widehat{f}_n-f^*)\|_{n}^2$. Since the noises $\eta_{1:n}$ are not uniformly bounded, we introduce a threshold $M_n \ge 1$, which will be chosen later, and define $\xi_{1:n}=(\xi_1, \dots,\xi_n)$ with $\xi_i=\cT_{M_n}\eta_i$ and the event $\cA_n := \{\eta_{1:n} = \xi_{1:n}\}$. Notice that $\eta_i = \xi_i$ if and only if $|\eta_i|\le M_n$. By the noise assumption \eqref{noise assumption}, we have
\begin{align*}
\bP(\cA_n) &= 1- \bP \left(\exists i\in \{1,\dots,n\} \mbox{ s.t. } |\eta_i|>M_n \right) \ge 1- \sum_{i=1}^{n} \bP(|\eta_i|>M_n) \\
&\ge 1- n \exp (-(M_n/c_\eta)^{1/q}).
\end{align*}
For any $\rho\ge 1$, we choose $M_n = c_\eta (\rho+\log n)^q$ so that $\bP(\cA_n) \ge 1-\exp(-\rho)$. In the following analysis, we condition on the event $\cA_n$ so that the noises are bounded by $M_n$ and hence they are sub-Gaussian with parameter $\varsigma = M_n$, see \citep[Example 2.4]{wainwright2019high} for instance. By Theorem \ref{local Gc bound}, since $\star(\partial \cH_n) \subseteq \NN(2W_n,L_n, 2S_n, B_n)$, we have
\[
\cG_n(\star(\partial \cH_n); \delta_n, \xi_{1:n}) \lesssim \delta_n M_n \frac{\sqrt{S_nL_n}}{\sqrt{n}} \log(nS_nB_n/\delta_n).
\]
By choosing suitable
\[
\delta_n \asymp M_n \frac{\sqrt{S_nL_n}}{\sqrt{n}} \log(nS_nB_n),
\]
we are guaranteed that the local complexity is upper bounded by $\delta_n^2$. Thus, we can apply Lemma \ref{oracle inequality} to the function class $\cH_n$, which implies that the inequality
\[
\|\widehat{f}_n-f_0\|_n^2 \le 3 \|f^*-f_0\|_n^2 + 32 \delta_n^2 \le 3 \varepsilon_{app}^2 + 32 \delta_n^2
\]
holds with probability at least $1-c_3 \exp(-c_4 n\delta_n^2/M_n^2)$. Combining the above analysis with the inequality (\ref{triangle ineq 2}), we conclude that
\begin{equation}\label{step 2 bound}
\|\cT_{2}(\widehat{f}_n-f^*)\|_{n}^2 \le 8 \varepsilon_{app}^2 + 64 \delta_n^2
\end{equation}
holds with probability at least $1- \exp(-\rho)- c_3 \exp(-c_4 n\delta_n^2/M_n^2)$.

\textbf{Step 3.} Optimizing the parameters. Combining the inequalities (\ref{error dec}), (\ref{step 1 bound}) and (\ref{step 2 bound}), we have
\[
\|\cT_1 \widehat{f}_n - f_0 \|^2_{L^2(\mu)} \le 34 \varepsilon_{app}^2 + 2\varepsilon_n^2 + 256\delta_n^2,
\]
which holds with probability at least $1- \exp(-\rho)- c_1 \exp(-c_2 n\varepsilon_n^2) - c_3 \exp(-c_4 n\delta_n^2/M_n^2)$. Notice that, by our choices of $\varepsilon_n$ and $\delta_n$, we always have $\varepsilon_n \lesssim \delta_n$ and $\varepsilon_n^2 \asymp \delta_n^2/M_n^2$. Thus, 
\[
\|\cT_1 \widehat{f}_n - f_0 \|^2_{L^2(\mu)} \lesssim \varepsilon_{app}^2 + \frac{M_n^2 S_nL_n}{n} (\log(nS_nB_n))^2,
\]
which holds with probability at least $1- \exp(-\rho)- c_5 \exp(-c_6 S_nL_n (\log(nS_nB_n))^2)$. By Theorem \ref{thm:app upper bound}, we can choose
\begin{align*}
\varepsilon_{app} &\asymp n^{-\frac{d+2s}{4d+4s}} (\log n)^{\frac{7d+12s}{4d+4s}}\\
W_n &\asymp n^{\frac{d}{2d+2s}} (\log n)^{-\frac{5d}{2d+2s}}\\
L_n &\asymp (\log n)^2 \\
S_n &\asymp n^{\frac{d}{2d+2s}} (\log n)^{\frac{4s-d}{2d+2s}}\\
B_n &\asymp 1.
\end{align*}
Then, it holds with probability at least $1- \exp(-\rho)- c_5 \exp(-c_6 n^{\frac{d}{2d+2s}} (\log n)^{\frac{7d+12s}{2d+2s}})$ that
\[
\|\cT_1 \widehat{f}_n - f_0 \|^2_{L^2(\mu)} \lesssim M_n^2 n^{-\frac{d+2s}{2d+2s}} (\log n)^{\frac{7d+12s}{2d+2s}}.
\]
Recalling that $M_n = c_\eta (\rho+\log n)^q$, we finish the proof.
\end{proof}

\subsection{Optimality}\label{optimality}
In this section, we establish a minimax lower bound for estimating $f_0 \in \cB_1^s(\Omega)$ under the $L^2(\Omega)$ norm, which proves Theorem \ref{thm:minimax lower bound}. The result shows that the convergence rate of the least squares estimator derived in the previous section is optimal up to logarithmic factors.

To apply the information-theoretic machinery based on Fano's inequality and the Yang-Barron method \citep{wainwright2019high,yang1999information}, we first recast the model in terms of the joint distribution of the data.
Recall that, in Theorem \ref{thm:minimax lower bound}, the training samples $\cD_n =\{(X_i, Y_i)\}_{i=1}^n$ satisfies
$$
Y_i = f(X_i) + \eta_i,\quad X_i \sim \mu,\quad \eta_i \stackrel{\mathrm{i.i.d.}}{\sim} \mathcal{N}(0,\sigma^2),
$$
for some $f \in \cB^s_1(\Omega)$ and $\mu$ is the uniform distribution on $\Omega$. Let $\bQ = \mu^{\otimes n}$ be the distribution of the covariates
$(X_1,\dots,X_n)$. For any $f \in \cB_1^s(\Omega)$, let
$\bP_f$ denote the conditional distribution of $(Y_1,\dots,Y_n)$
given $(X_1,\dots,X_n)$,
$$
\bP_f = \mathcal{N}(f(X_1),\sigma^2) \times \cdots	\times \mathcal{N}(f(X_n),\sigma^2).
$$
The full joint distribution of the observed data $\cD_n$ is therefore $\bP_f \times \bQ$.

To derive minimax lower bound, we consider the following hypothesis testing problem defined by the family of distributions $\{\bP_{f_j} \times \bQ: j=1,\dots,M \}$, where $f_1, \dots, f_M \in \cB^s_1(\Omega)$ such that $\|f_i-f_j\|_{L^2(\Omega)} \ge 2\delta$ for any $i\neq j$. We generate a random variable $Z$ by first sample a random integer $J$ from the uniform distribution over $\{1,\dots,M\}$ and then condition on $J=j$, sample $Z$ from $\bP_{f_j} \times \bQ$. A testing function $\Psi$ for this problem is a mapping that assigns $Z$ to one of the index in $\{1,\dots,M\}$. Let $\bP_{Z,J}$ denotes the joint probability distribution of $(Z,J)$. The error probability of the testing functions gives a lower bound for the minimax rate \citep[Proposition 15.1]{wainwright2019high},
\begin{equation}\label{minimax test bound}
\inf_{\widehat{f}} \sup_{f_0 \in \cB_1^s(\Omega)} \bE_{\cD_n}\left[ \|\widehat{f} - f_0\|_{L^2(\Omega)}^2 \right] \ge \delta^2 \inf_{\Psi} \bP_{Z,J}(\Psi(Z) \neq J),
\end{equation}
where the infimum ranges over testing functions. Fano's inequality lower bound the above error probability by using the mutual information between $Z$ and $J$. Recall that, for two probability measures $\mu$ and $\nu$, which are absolutely continuous with respect to some probability measure $\tau$ on $\cX$, their Kullback-Leibler divergence is defined as
\[
D_{\mathrm{KL}} (\mu \| \nu) = \int_\cX p(x) \log \frac{p(x)}{q(x)} d\tau(x),
\]
where $p(x)$ and $q(x)$ are the densities of $\mu$ and $\nu$ respectively. Note that the random variables $Z$ and $J$ are independent if and only if their joint distribution $\bP_{Z,J}$ is equal to the product of its marginals, namely, $\bP_Z \bP_J$. Thus, we can measure the their dependence by using the KL-divergence of these two measures
\[
I(Z;J) = D_{\mathrm{KL}} (\bP_{Z,J} \| \bP_Z \bP_J),
\]
which is Shannon's mutual information between $Z$ and $J$. Fano's inequality shows that (\ref{minimax test bound}) can be lower bounded by \citep[Eq. (15.31)]{wainwright2019high}
\begin{equation}\label{fano}
\inf_{\Psi} \bP_{Z,J}(\Psi(Z) \neq J) \ge 1 - \frac{I(Z;J)+\log 2}{\log M}.
\end{equation}
Thus, it is sufficient to find an upper bound for the mutual information $I(Z;J)$. \citet{yang1999information} gave a systematical way to achieve this purpose. Let $\cP= \{\bP_f \times \bQ :f\in \cB^s_1(\Omega)\}$ and $\cN_{\mathrm{KL}}(\varepsilon, \cP)$ denotes the $\varepsilon$-covering number of $\cP$ in the square-root KL-divergence. Then $I(Z;J)$ can be upper bounded as \citep[Lemma 15.12]{wainwright2019high}
\begin{equation}\label{eq:yang barron}
I(Z;J) \le \inf_{\varepsilon>0} \left\{ \varepsilon^2 + \log \cN_{\mathrm{KL}}(\varepsilon,\cP) \right\}.
\end{equation}

We use the above framework to prove Theorem \ref{thm:minimax lower bound}, which gives a minimax lower bound for learning spectral Barron functions.

\begin{proof}[Proof of Theorem \ref{thm:minimax lower bound}]
For convenience, let us denote the $\varepsilon$-covering and $\varepsilon$-packing numbers of $\cB^s_1(\Omega)$ in $L^2(\Omega)$ norm by $\cN(\varepsilon)$ and $\cM(\varepsilon)$ respectively. Observe that, for any $f,g\in \cB^s_1(\Omega)$,
\begin{align*}
D_{\mathrm{KL}} (\bP_f \times \bQ \| \bP_g \times \bQ) &= \bE_{(X_1,\dots,X_n)\sim \bQ} \left[ D_{\mathrm{KL}}(\bP_f\| \bP_g) \right] \\
&= \bE_{(X_1,\dots,X_n)\sim \bQ} \left[ \frac{1}{2\sigma^2} \sum_{i=1}^n (f(X_i)-g(X_i))^2 \right] \\
&= \frac{n}{2\sigma^2} \|f-g\|_{L^2(\Omega)}^2.
\end{align*}
Thus, any $\sqrt{2}\sigma \varepsilon /\sqrt{n}$-cover in $L^2(\Omega)$ is an $\varepsilon$-cover in square-root KL-divergence and we have 
\[
\cN_{\mathrm{KL}}(\varepsilon,\cP) \le \cN(\sqrt{2}\sigma \varepsilon /\sqrt{n}).
\]

The covering number can be estimated by using inequality (\ref{temp2}) and Remark \ref{app remark}, which show that any functions in $\cB^s_1(\Omega)$ can be approximated by function $f_{N,T}$ of the form (\ref{f_NT}) with approximation error $\cE_N \asymp N^{-\frac12-\frac{s}{d}}$ and $T=\cE_N^{-1/s}$. Since the function $f_{N,T}$ is parameterized by $CN$ parameters in $[-T,T]$, it is easy to prove that the $\varepsilon$-covering number of these functions is $\cO((1+T/\varepsilon)^{CN})$ by discretizing these parameters. By letting $\varepsilon \asymp \cE_N \asymp N^{-\frac12-\frac{s}{d}}$, we get
\[
\log \cN(\varepsilon) \lesssim  \varepsilon^{-\frac{2d}{d+2s}} \log(1/\varepsilon).
\]
On the other hand, we can lower bound the packing number $\cM(\varepsilon)$ using Lemma \ref{lem:barron-separated-set}, which shows that, for every \(R\ge 1\), there exist $N_R\ge 2^{cR^d}$ functions $f_1,\dots,f_{N_R}\in \mathbb B_1^s(\Omega)$ such that $\|f_i-f_j\|_{L^2(\Omega)} > c_0 R^{-s-d/2}$ for all $i\neq j$. By letting $\varepsilon = c_0 R^{-s-d/2}$, we have
\[
\log \cM(\varepsilon) \gtrsim \varepsilon^{-\frac{2d}{d+2s}}.
\]

For large $n$, we can choose $\varepsilon_n \asymp n^{\frac{d}{4d+4s}} (\log n)^{\frac{d+2s}{4d+4s}}$ such that $\log \cN(\sqrt{2}\sigma \varepsilon_n /\sqrt{n}) \le \varepsilon_n^2$. By (\ref{eq:yang barron}),
\[
I(Z;J) \le \inf_{\varepsilon>0} \left\{ \varepsilon^2 + \log \cN(\sqrt{2}\sigma \varepsilon_n /\sqrt{n}) \right\} \le 2 \varepsilon_n^2.
\]
We can choose $\delta_n \asymp \varepsilon_n^{-\frac{d+2s}{d}} \asymp n^{-\frac{d+2s}{4d+4s}} (\log n)^{-\frac{d+2s}{4d+4s} \frac{d+2s}{d}}$ such that $\log \cM(2\delta_n) \ge 4 \varepsilon_n^2 + 2 \log 2$. Then, inequality (\ref{fano}) with $M=\cM(2\delta_n)$ implies
\[
\inf_{\Psi} \bP_{Z,J}(\Psi(Z) \neq J) \ge 1 - \frac{2 \varepsilon_n^2+\log 2}{\log \cM(2\delta_n)} \ge \frac{1}{2}.
\]
Finally, using (\ref{minimax test bound}), we have
\[
\inf_{\widehat{f}} \sup_{f_0 \in \cB_1^s(\Omega)} \bE_{\cD_n}\left[ \|\widehat{f} - f_0\|_{L^2(\Omega)}^2 \right] \ge \frac{1}{2} \delta_n^2 \gtrsim \left(n(\log n)^{1+\frac{2s}{d}}\right)^{-\frac{d+2s}{2d+2s}},
\]
which completes the proof.
\end{proof}

\section*{Acknowledgments}

The work described in this paper is partially supported by National Natural Science Foundation of China under Grants 12501131 and 12526216.

\bibliographystyle{myplainnat}
\bibliography{References}
\end{document}